\newtheorem{definition}{Definition}
\newtheorem{theorem}{Theorem}
\newcommand{\reva}[1]{\textcolor{black}{#1}} 
\newcommand{\revb}[1]{\textcolor{black}{#1}} 
\newcommand{\OO}{\mathcal{O}}
\newcommand{\reals}{\mathbb{R}}
\DeclareSIUnit{\ttimes}{$\times$}
\journal{Elsevier}
\newcommand{\inr}[0]{\bm{\Phi}}
\begin{document}

    \begin{frontmatter}
    
        \title{\textit{In Situ} Training of Implicit Neural Compressors for Scientific Simulations via Sketch-Based Regularization}
        
        \author[1,3]{Cooper Simpson\corref{mycorrespondingauthor}}
        \ead{rscooper@uw.edu}
        \author[2]{Stephen Becker}
        \ead{stephen.becker@colorado.edu}
        \author[3]{Alireza Doostan\corref{mycorrespondingauthor}}
        \cortext[mycorrespondingauthor]{Corresponding author}
        \ead{doostan@colorado.edu}
        \affiliation[1]{address={Applied Mathematics, University of Washington, Seattle}}
        \affiliation[2]{address={Applied Mathematics, University of Colorado, Boulder}}
        \affiliation[3]{address={Ann and H.J. Smead Department of Aerospace Engineering Sciences, University of Colorado, Boulder}}
        
        \begin{abstract}
            Focusing on implicit neural representations, we present a novel \textit{in situ} training protocol that employs limited memory buffers of full and sketched data samples, where the sketched data are leveraged to prevent catastrophic forgetting. The theoretical motivation for our use of sketching as a regularizer is presented via a simple Johnson-Lindenstrauss-informed result. While our methods may be of wider interest in the field of \textit{continual learning}, we specifically target \textit{in situ} neural compression using implicit neural representation-based hypernetworks. We evaluate our method on a variety of complex simulation data in two and three dimensions, over long time horizons, and across unstructured grids and non-Cartesian geometries. On these tasks, we show strong reconstruction performance at high compression rates. Most importantly, we demonstrate that sketching enables the presented \textit{in situ} scheme to approximately match the performance of the equivalent offline method.
        \end{abstract}
        
        \begin{keyword}
            Continual learning \sep Neural fields \sep Streaming data \sep Scientific machine learning \sep Unstructured data \sep Catastrophic forgetting
        \end{keyword}
    
    \end{frontmatter}


    \section{Introduction}\label{sec:intro}

        Modern large-scale scientific simulations can generate enormous datasets that are far too large to store in their entirety offline, yet one would still like access to the data for analysis. This necessitates performant compression methods, i.e., those capable of high compression rates greater than 100\(\times\) and low relative errors, e.g., close to 1\%. Of equal importance is for these methods to be operable \textit{in situ} with the simulation itself. This is because, by definition, the problem assumes there is no offline access to the relevant data, so any practical application of the compression tool must be \textit{in situ}. Another difficult aspect of compressing scientific data is that the underlying meshes are often not uniformly structured. Many simulations require non-Cartesian domains and/or may alter the density of nodes throughout this domain to resolve specific phenomena accurately, requiring compression techniques that can operate on an arbitrary collection of points. \Cref{fig:mesh-types} details examples of commonly-used uniform/non-uniform and structured/unstructured meshes. Our work seeks to address these three challenging problems.

        \begin{figure}[th]
            \centering
            \includegraphics[trim = 410mm 120mm 410mm 120mm, clip, width=0.24\textwidth]{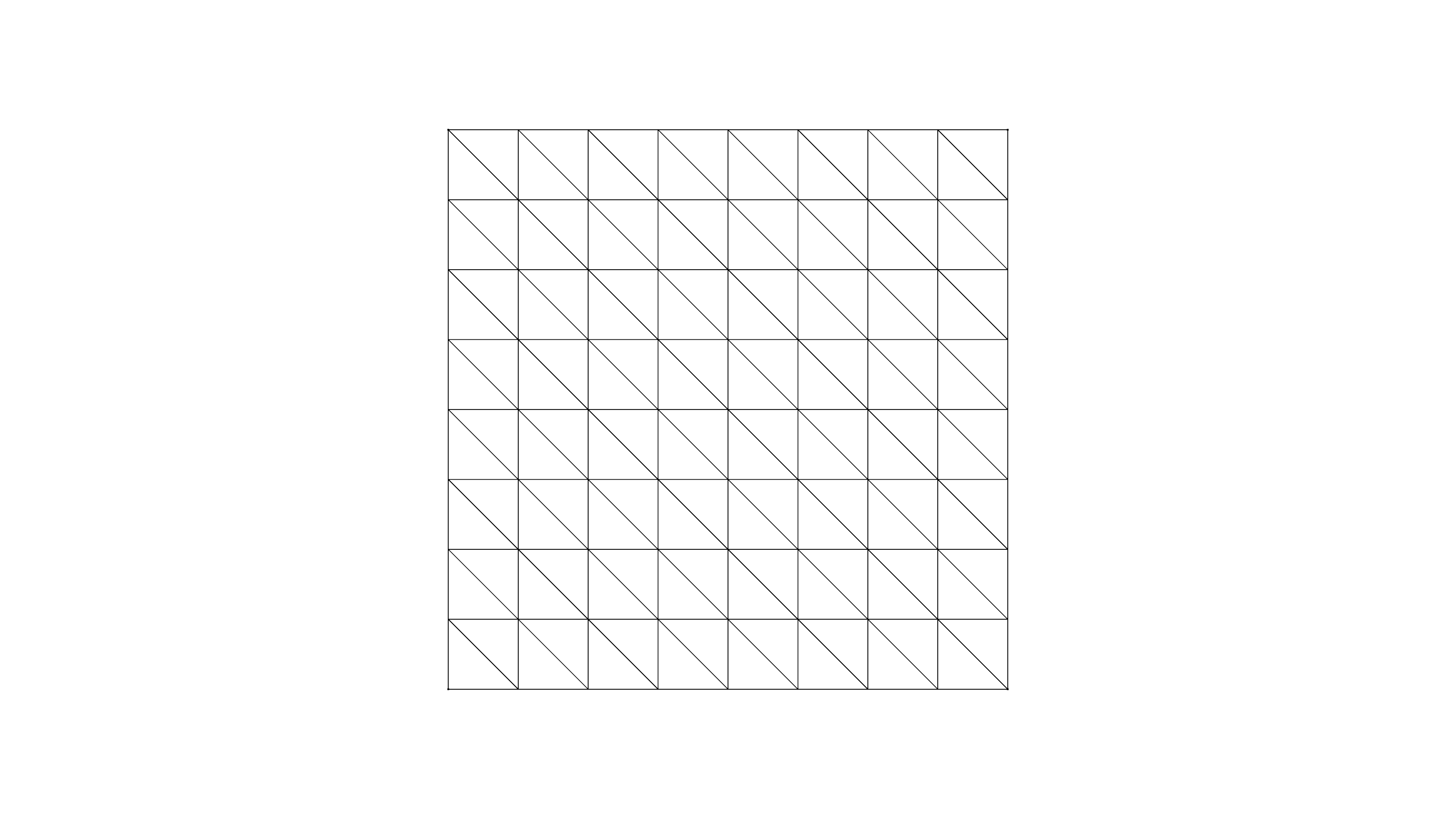}
            \includegraphics[trim = 410mm 120mm 410mm 120mm, clip, width=0.24\textwidth]{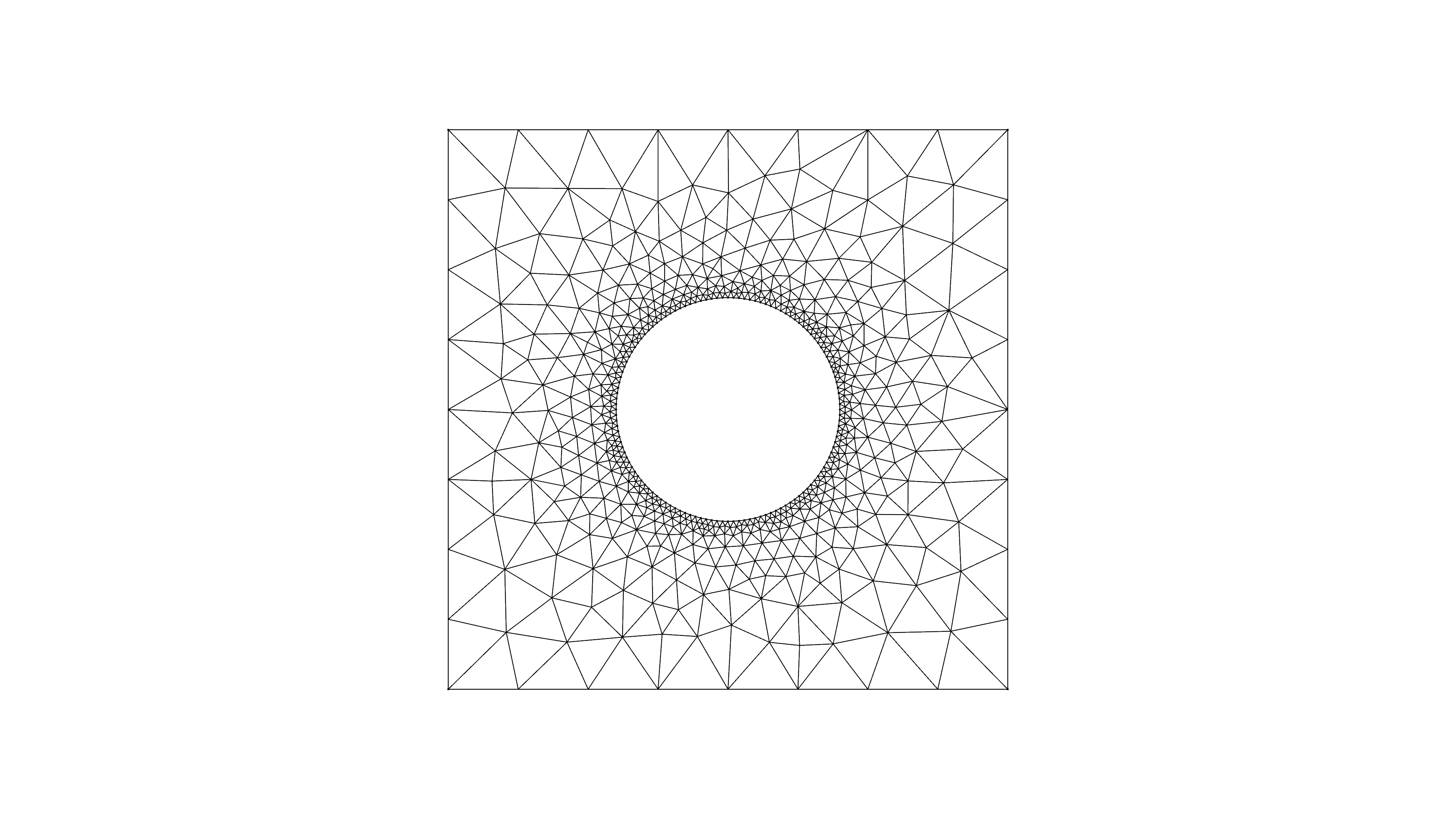}
            \includegraphics[width=0.49\textwidth]{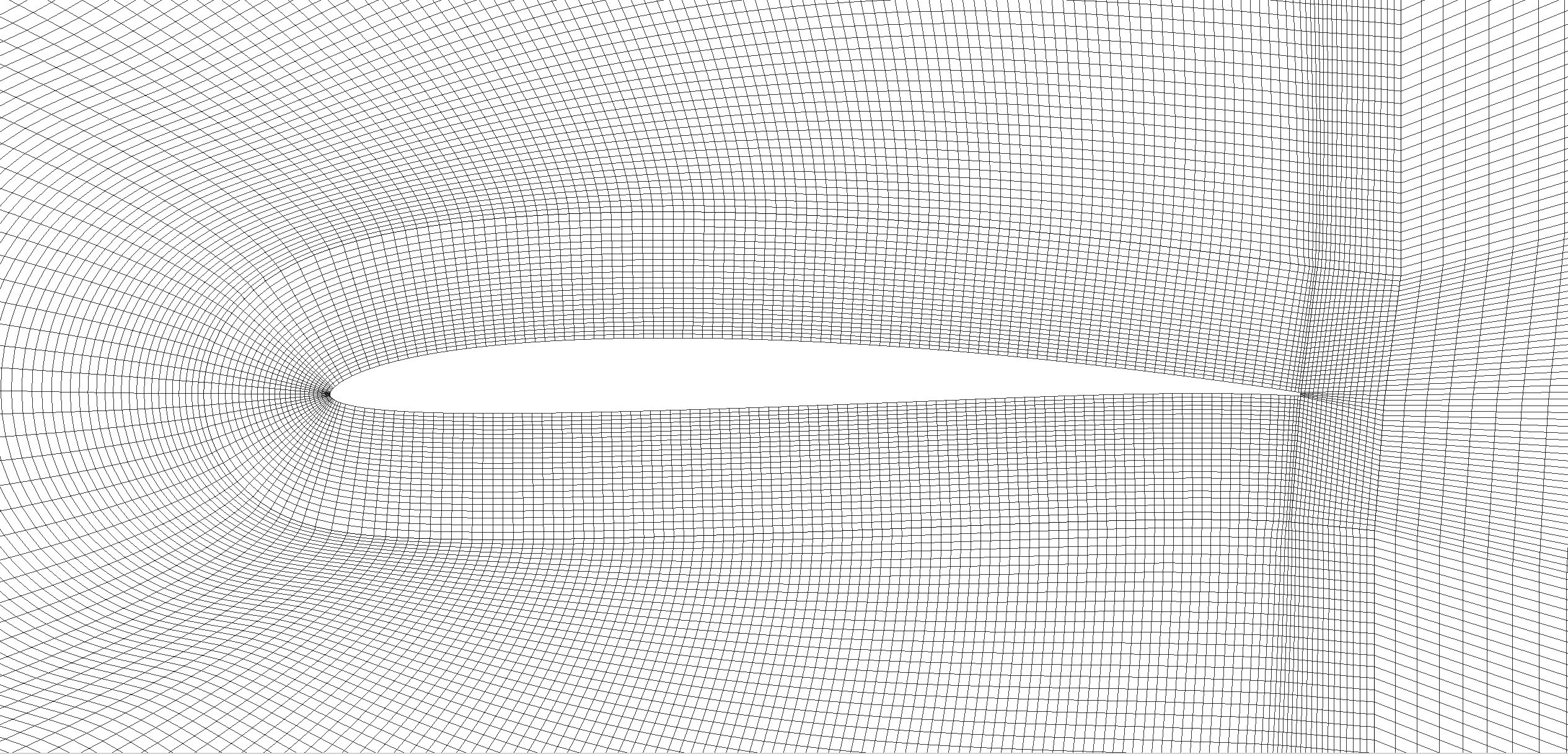}          
            \caption{The compression approach of this study applies to all mesh/geometry types, including (uniform) structured (left), unstructured (middle), and curvilinear (right), as it uses the node locations without connectivity information.}
            \label{fig:mesh-types}
        \end{figure}

         Neural network-based compression methods can capture non-linear phenomena and sharp features, such as physical discontinuities, that pose difficulties for traditional linear dimensionality reduction techniques, such as those based on singular value decomposition (SVD). Several neural compression approaches exist, which can compress the data directly or learn a reduced dimensionality representation, but all require optimizing neural network weights under some training strategy. One such method, implicit neural representation (INR) \cite{park2019deepsdf,genova2019learning, sitzmann2020implicit}, is an implicit, continuous, and differentiable neural network model that, conveniently, only requires access to the space-time coordinates of points, circumventing the connectivity structure entirely. INR, with some specific recent developments, has shown outstanding accuracy in representing a diverse set of signals; see, e.g., \cite{sitzmann2020implicit,zhang2021implicit,pan2023neural,han2023kd,sales2024implicit} or \Cref{fig:inr-offline} wherein we illustrate sample compression results for INRs trained offline on two complex simulation datasets.
        
        \begin{figure}[b]
            \centering
            \begin{subfigure}[b]{0.48\textwidth}
                \includegraphics[width=\textwidth]{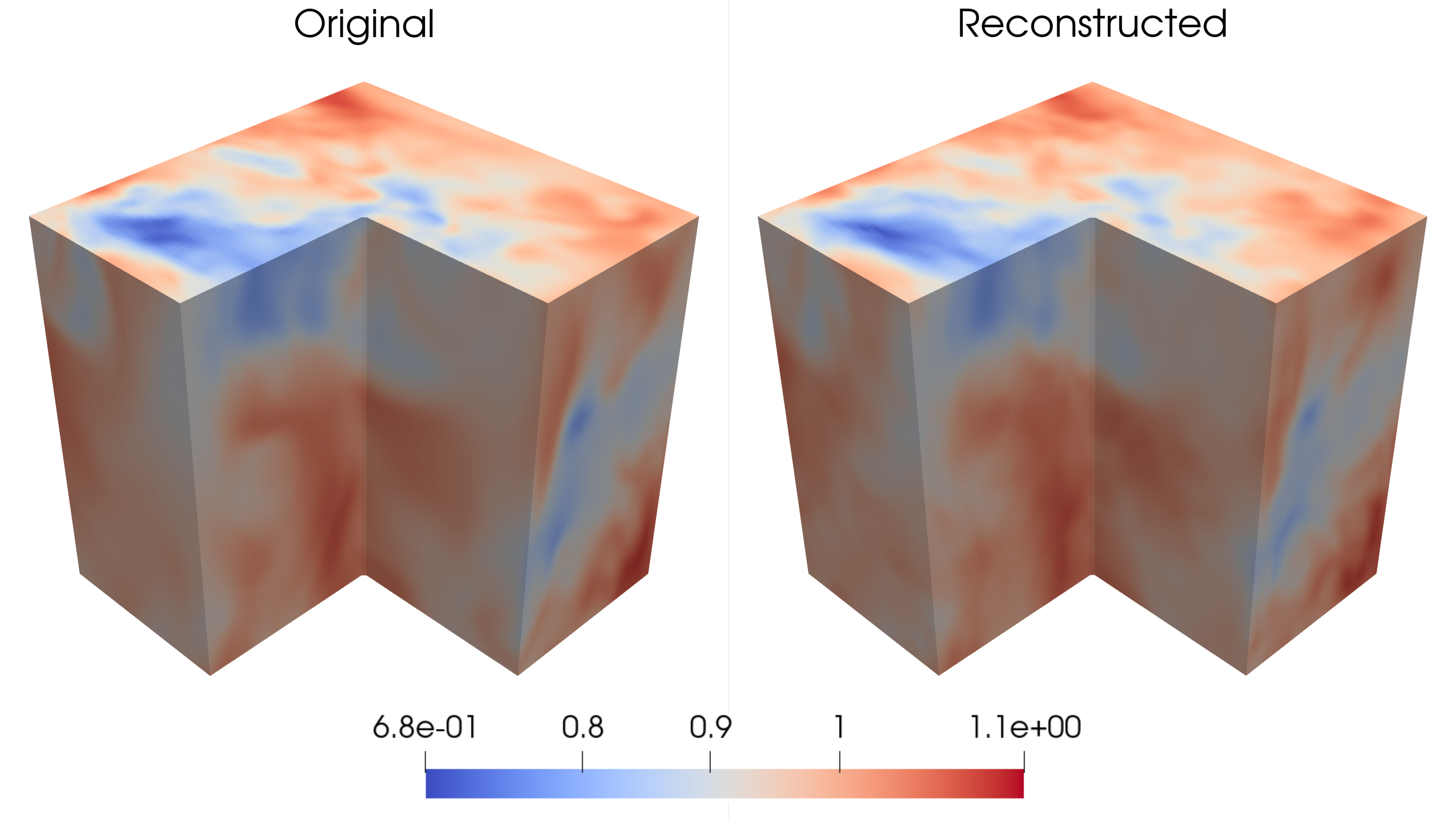}
                \caption{Streamwise velocity INR reconstruction at 1.1\% relative error and 40.5dB PSNR for a 2,366\(\times\) compression rate.}
                \label{fig:channel-flow-offline-x}
            \end{subfigure}
            \hfill
            \begin{subfigure}[b]{0.48\textwidth}
                \includegraphics[width=\textwidth]{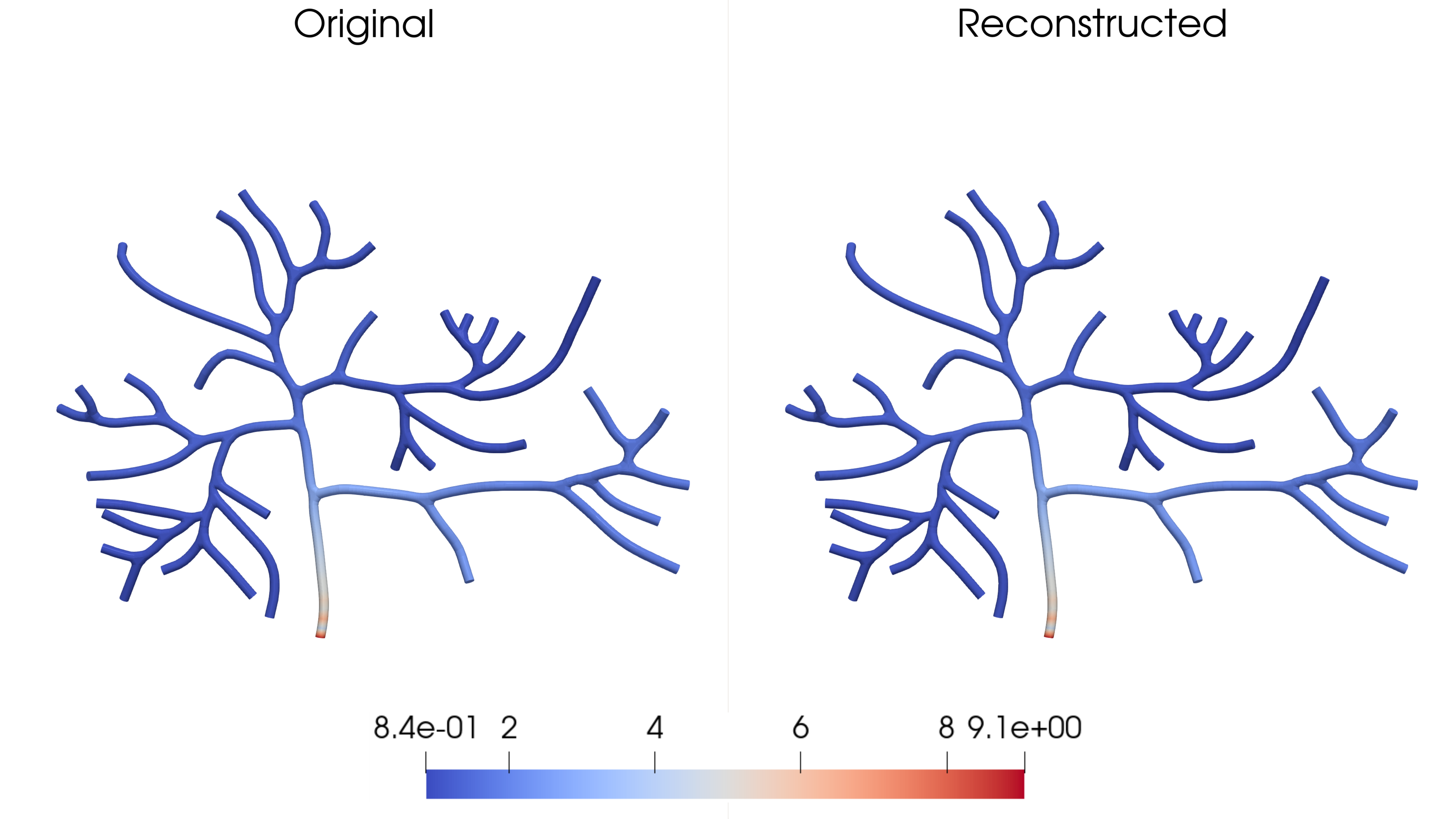}
                \caption{Species concentration (\%) INR reconstruction at 0.5\% relative error and 62.9dB PSNR for a 1,392\(\times\) compression rate.}
                \label{fig:neuron-transport-offline}
            \end{subfigure}
            \caption{Compression results for INRs trained offline. See \cref{sec:experiments} for details on the datasets.}
            \label{fig:inr-offline}
        \end{figure}
        
        Our interest is in extending these INR compression techniques to the \textit{in situ} setting, meaning the scenario where data samples (or points, entries, etc.) arrive one-at-a-time from a process (e.g., simulation) and cannot all be stored due to memory constraints. We are careful to use the term \textit{in situ} as opposed to online or streaming because our procedure will not be able to run indefinitely and will not end precisely when the data generation process ends.
        
        Available data samples are stored in what we will refer to as the \textit{buffer}, which in the extreme case is only large enough to store a single sample, though we will typically target the case when the buffer can hold the equivalent of a few dozen data samples. Altering the size of this buffer, along with varying how many samples arrive together or what information from the samples is stored, allows one to interpolate between the pure online and offline settings. Our notion of samples is general and need not be a single number; in a typical example, a data sample is the entire spatial field at a given time (a snapshot), arising from a time-dependent partial differential equation (PDE) simulation. Non-stationary training for traditional machine learning techniques has been widely studied, but often less so in the case of neural networks, where it is typically referred to as continual learning. Hypernetworks, \cite{ha2017hypernetworks}, which are neural network architectures that learn to generate weights for a target network, are perhaps the most successful and well-known method for continual learning. Appropriated from multi-task learning \cite{caruana1997multitask}, they allow for a natural extension to \textit{in situ} learning by equating one or multiple time steps with a task.

        In this study, we present a novel learning procedure based on randomized sketching for \textit{in situ} training of accurate and highly compressed neural representations of scientific simulation data using an INR-based hypernetwork architecture. Importantly, our methods are mesh-agnostic, only requiring access to nodes, so they are applicable to arbitrarily complex geometries. We highlight the fact that while sketching has been used extensively for low-rank compression, to our knowledge, this is the first time it has been used for regularizing neural compression. The remainder of this section will cover the necessary background and related work on INR, hypernetworks, and compression. \Cref{sec:insitu-training} will detail our \textit{in situ} training, and \cref{sec:experiments} extensively investigates our method's practical performance on a variety of relevant datasets.

        \subsection{Preliminaries and Related Work}\label{sec:preliminaries}

            We will denote vectors using lowercase bold font (e.g., \(\bm{x}\)) and matrices or operators using uppercase bold font (e.g., \(\bm{X}\)). We will specialize to the task of compressing PDE simulation data and consider a fixed, possibly unstructured or non-uniform, mesh represented as \(\bm{X}\in\reals^{n\times d}\) consisting of \(n\) points in \(d\) spatial dimensions (typically 2 or 3). Additionally, we are provided with a \(c\)-dimensional time-dependent signal (or vector-field) \(\bm{u}:\reals^{d+1}\to\reals^c\) sampled at all mesh nodes for time steps \(t=1,2,\ldots,T\), generated by the numerical PDE solver\footnote{For simplicity of notation, we assume a uniform time grid with unit spacing, but the method extends straighforwardly to nonuniform time steps.}. For \(c>1\), we may think of the different output coordinates of \(\bm{u}\) as \emph{channels}; for example, $\bm{u}$ might represent the 3D velocity field of a fluid, in which case $c=3$.

            Although our core methods are based on points \((\bm{x},t)\in\reals^{d+1}\), for training, we will often consider a single sample to be a snapshot, i.e., \(\bm{U}_t:=\bm{u}(\bm{X},t)\), \(\bm{U}_t\in\reals^{n\times c}\). We then define the reconstruction of this snapshot, via a neural network \(\inr:\reals^{d+1}\to \reals^c\) parameterized by \(\bm{\theta}\), as \(\bm{\widetilde{U}}_t:=\inr(\bm{X},t;\bm{\theta})\). The concatenation of these into the full \(T\times n\times c\) dateset and its reconstruction are given as \(\bm{U}=[\bm{U}_1;\cdots;\bm{U}_T]\) and \(\bm{\widetilde{U}}=[\bm{\widetilde{U}}_1;\cdots;\bm{\widetilde{U}}_T]\), respectively. \reva{In certain contexts, we will drop the dependence of \(\bm{\Phi}\) on one or more of its arguments to make clear that these are fixed. For example, we write \(\bm{\Phi}(\bm{\theta})\) when considering the network as a function of its parameters for fixed input coordinates.}
            
            Our method will be used to optimize the parameters \(\bm{\theta}\) of the neural network, whereupon denoting the number of parameters as \(|\bm{\theta}|\), we can define the compression rate as follows:
            \begin{equation}\label{eq:cr}
                \frac{T\times n\times c}{|\bm{\theta}|}.
            \end{equation}
            Note that this does not include the required storage for the mesh or network definition, or intermediate storage for the \textit{in situ} learning scheme. The formula assumes consistent floating-point representation of all numbers, but should be adjusted in the obvious way if, for example, some of the network weights are represented in different precisions.

            \paragraph{Implicit neural representation (INR)} INR is a straightforward form of supervised deep learning to train a neural network \(\inr\) to approximate a target signal \(\bm{f}\), and is the building block framework upon which popular extensions, such as PINNs \cite{raissi2019physics}, have been built. The key concept is to build a neural network that views input as discrete samples of a continuous object, so that dimensionality does not increase with resolution. Typically, the network in question is of a simple, densely-connected design, and the signal is a function of spatial or spatio-temporal coordinates (for notational simplicity, in this subsection we assume a purely spatial domain and hence denote the coordinates by \(\bm{x}\)). To be more precise, we consider a signal \(\bm{f}:\reals^d\to\reals^c\) that maps coordinates in \(d\) dimensions to a vector of \(c\) features. Examples include 2D image pixel coordinates mapping to RGB colors, or mesh coordinates and time mapping to the flow field of a numerical PDE solution. \Cref{eq:inr-network} defines the simplest INR with \(L\) feedforward layers and non-linear activation function \(\sigma\), but one should note that more complicated architectural features, such as residual connections, can be included:
            \begin{align}\label{eq:inr-network}
                &\bm{x}_0 = \bm{x}, \nonumber\\
                &\bm{x}_{\ell} := \bm{\phi}_{\ell}(\bm{x}_{{\ell}-1}) = \sigma\left(\bm{W}_{\ell}\bm{x}_{{\ell}-1}+\bm{b}_{\ell}\right), \quad(\forall \ell=1,\ldots,L-1), \nonumber\\
                &\inr(\bm{x}) = \bm{W}_L(\bm{\phi}_{L-1}\circ\bm{\phi}_{L-2}\circ\cdots\circ\bm{\phi}_{1})(\bm{x})+\bm{b}_L,
            \end{align}
            for some trainable weight matrices \(\bm{W}_{\ell}\) and biases \(\bm{b}_{\ell}\), which are collected into the set of network parameters \(\bm{\theta}\). A dataset consists of coordinate-value pairs \(\{(\bm{x}_i,\bm{f}_i=\bm{f}(\bm{x}_i))\}\), which one uses to optimize the parameters \(\bm{\theta}\) of \(\inr\), such that \(\mathcal{L}(\inr(\bm{x}_i;\bm{\theta}), \bm{f}_i)\) is minimized for some target loss \(\mathcal{L}\). It is not uncommon for the loss to be of Sobolev type, i.e., including a term targeting the difference of the derivatives (either exact or approximate) of \(\bm{f}\) and \(\inr\) with respect to some or all of the coordinates.
    
            The activation function \(\sigma\) has been shown to play a significant role in the performance of INR. Most notably, the work of \cite{sitzmann2020implicit} demonstrated that a sinusoidal activation with a specific random initialization of the network weights is a vastly superior option compared to ReLU, Tanh, or other popular alternatives, calling their framework a SIREN network. Extensions of the periodic activation function have been considered by, e.g., \cite{saragadam2023wire, roddenberry2023implicit}, which introduce and analyze complex wavelets. See the comprehensive review \cite{essakine2024we} for further details on the current state of INR.  
    
            \paragraph{Hypernetworks} Hypernetworks are a simple but powerful concept in deep learning wherein one network, the hypernetwork, is used to generate weights for a target network. Originally developed by \cite{ha2017hypernetworks} as a form of weight sharing for convolutional and recurrent models, they have since been integrated with a variety of deep learning pipelines. \revb{Often, the hypernetwork is much smaller than the target network\footnote{\revb{Although this is heavily dependent on the mapping and is not the case in our models.}}} and maps from an encoding of a specific task; for example, language pairs in translation. Less complicated architectures map directly to the full vector of target weights, but for large target models, this can be inefficient. Other approaches generate weights in chunks or via scaling factors. For details on the extensive work surrounding hypernetworks, see the review \cite{chauhan2024brief}.
    
            Of particular relevance to our work is that of \cite{pan2023neural}, which combines hypernetworks and INR for learning low-dimensional representations of large scientific datasets. We use a similar sine-based INR with residual connections and generate time-dependent weights in the hypernetwork. Compression is not the focus of their work, and all models are trained offline, unlike our focus on \textit{in situ} training and compression. Further, many of the stated disadvantages in the aforementioned work stem from a comparison to the SVD, which does not require hyperparameter tuning or network optimization. However, when the signal of interest exhibits a large Kolmogorov $n$-width, as in compressible and advection-dominated flows, linear dimensionality reduction techniques, such as the SVD, cease to be effective, thus motivating the use of non-linear dimensionality reduction, e.g., methods based on neural networks \cite{lee2020model}. 
            We also find that most of the remaining disadvantages, such as improving parameter efficiency or reducing training time, are deployment tasks, which can be tackled in isolation from the core compression problem.
    
            \paragraph{Non-stationary learning} The machine learning community has used many synonymous or related terms, such as online learning, incremental learning, and continual learning, to describe training on data with a non-stationary distribution. This setting stands in contrast to the usual offline regime of most deep learning, where one assumes unfettered access to a fixed training dataset, which is used to optimize a model that is then deployed or tested on other data. In reality, a spectrum exists between fully offline learning and fully online learning, where samples arrive once and one at a time. Possible variations include multiple samples available at one time or the ability to store some data offline. The key difficulty lies in avoiding catastrophic forgetting, i.e., maintaining good performance for samples seen earlier in the training process \cite{kirkpatrick2017overcoming}. 
                
            Hypernetworks are especially adept at tackling these non-stationary problems, as the work of \cite{von2019continual} highlights. To address catastrophic forgetting, they introduce a regularization term that seeks to maintain the hypernetwork output on previously learned tasks while learning a new task. A copy of the model is saved after learning each new task, which is used to reproduce the output of the hypernetwork on previous tasks. This method yields exceptional performance and limited forgetting, but the maximum number of tasks never exceeds 100. In our context, a task is a time step, and 100 is far fewer than the number of time steps often encountered in PDE simulations.
    
            Our regularization approach using sketched spatial snapshots is more similar to the {\it experience replay} or {\it rehearsal} techniques commonly seen in reinforcement learning \cite{rolnick2019experience} and throughout continual learning \cite{wang2024comprehensive}, though the technical details are notably different. 
    
            \paragraph{Compression} Both lossless and lossy compression for scientific data have long been targets of investigation. We emphasize the fact that this is a very dynamic field with significant amounts of research, much of it beyond our scope.
            
            A large portion of this effort has been focused on error-bounded schemes for highly accurate compression of structured data, but at relatively small compression rates. Well-known tools include ZFP \cite{lindstrom2014zfp} and SZ3 \cite{liang2023sz}, along with the recently introduced MFZ \cite{doherty2024mfz} that extends to non-uniform data. See the recent review \cite{di2025survey} for further details on error-bounded methods.
            
            As mentioned earlier, significant effort has been put towards linear low-rank approximation methods for compression, motivated by the best\footnote{In any unitarily invariant norm.} rank-\(k\) approximation from the SVD \cite{eckart1936approximation}. Deterministic methods are often too computationally expensive for large datasets, so many methods turn to randomized decompositions. This employs randomized sketching \cite{halko2011finding} ---  similar to what we consider in this work --- but directly for compression as opposed to indirectly through regularization. We highlight the single-pass randomized methods using the SVD \cite{yu2017single} and interpolative decomposition (ID) \cite{dunton2020pass,pacella2022task,li2025online} for their focus on \textit{in situ} compression.
                
            One of the more significant directions of neural compression research uses INR in implicit neural compression (INC). Here, compression is accomplished by simply requiring that the size of the network is smaller than the size of the data so that \cref{eq:cr} is greater than \(1\). Other approaches to compression include autoencoders, with recent developments to enable arbitrarily structured meshes \cite{doherty2024quadconv}. A mixed autoencoder INR approach is also developed in \cite{pham2023autoencoding}. The work of \cite{yang2023tinc} introduces a blocking tree-based method with parameter sharing, and multiscale representations have also been considered in \cite{saragadam2022miner,tang2024ecnr}.
    
            Most INC research has been focused on images \cite{strumpler2022implicit,dupont2021coin} or video \cite{zhang2021implicit, chen2021nerv}, and some for general signals \cite{pistilli2022signal}. More relevant to our task are \cite{lu2021compressive,sales2024implicit}, which apply INC to three and four-dimensional volumetric data from scientific simulations. Importantly, these works consider the offline scenario, incurring large training times and often using the entire dataset for meta-learning in a pre-training stage. Most also incorporate some form of quantization to reduce the model parameters, and thus increase the compression rate further.

            A hybrid approach is considered in \cite{madireddy2021}, where a convolutional neural network is used to correct JPEG-based compression via artifact removal. A pre-trained model is fine-tuned \textit{in situ}, in part using a buffer of stored data, before it is deployed to compress, i.e., remove artifacts, in a full simulation. While their use of experience replay ideas is similar to ours, the fundamental process is significantly different, given our sketch-based approach. Additionally, similar to many offline neural compression methods, their work requires significant processing outside of the \textit{in situ} learning stage.
    
            The most similar work to our own is the \emph{knowledge distillation} approach of \cite{han2023kd} (KD-INR), which also uses INR, but not hypernetworks. The difficulty associated with \textit{in situ} learning is avoided by breaking the problem down into individual offline learning tasks. In particular, a new time-independent INR is trained at every time step, and then these are all distilled into a single time-dependent INR using a final knowledge transfer training stage. This final stage requires an extra training step, relative to our method, after the simulation has concluded. Additionally, this approach uses an offline pre-processing step to normalize the data and does not consider multi-channel training, although it is unclear how this impacts performance. Without the offline normalization step, KD-INR is also an \textit{in situ} method, but it incurs different costs from our own approach, due to saving different objects. We did not compare directly to KD-INR because the code and data were not publicly available, so this is left to future work.

    \section{\textit{In Situ} Training}\label{sec:insitu-training}

        The \textit{in situ} setting is difficult because it limits the amount of training data the network is exposed to, and the sequential nature of the process can cause catastrophic forgetting of previously learned information. We propose extending INC to the \textit{in situ} mode via a simple procedure that may be of independent use, as it does not depend on the network architecture in any particular way. Our procedure essentially falls under the experience replay umbrella as a method to regularize the learning process, but our twist is to do so with sketched data samples. Doing so enables us to store data from more time steps, as the size of that data is much smaller.

        \begin{figure}[hb]
            \centering
            \includegraphics[width=0.5\textwidth]{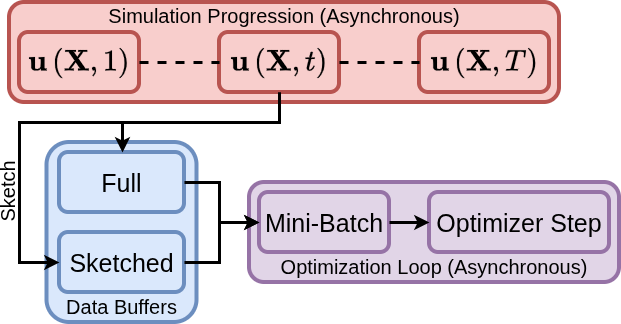}
            \caption{Our \textit{in situ} INC training approach relies on a sketched data buffer to avoid catastrophic forgetting.}
            \label{fig:buffer}
        \end{figure}
        
        To ensure that the data is used most effectively in this single-pass regime, we first assume that we have some amount of available working memory, large enough to hold multiple snapshots at once. This memory is used to implement a buffer which holds data extracted from the simulation, visualized in \cref{fig:buffer}. The first portion of this buffer holds full snapshots in a queue of size \(T_f\ll T\) --- in fact, our numerical experiments use \(T_f=1\). Newly generated snapshots are pushed into the queue, replacing old ones as the maximum size is reached. The second portion of the buffer contains sketched snapshots; the overall size of this buffer is not large, but its length \(T_s\leq T\) can be quite long since the dimension of each sketch is very small. As we shall explain in further in \Cref{sec:sketching}, sketching refers to a linear dimensionality reduction technique wherein a linear operator \(\bm{S}:\reals^{n\times c}\to\reals^{k\times c}\) projects an \(n\)-dimensional vector onto a \(k\)-dimensional subspace, with \(k\ll n\), such that some information in the original data is preserved. The total size of the stored buffer data is then given as follows:
        \[(T_f\times n + T_s\times k)\times c.\]
        The optimization loop then extracts a mini-batch (sub-sample) of full and sketched samples from the buffer to use in its update step. One should note that the simulation and optimization processes may be asynchronous, and in practice, many optimization iterations may take place in between new snapshots being generated.

        To frame our \textit{in situ} loss, it helps to consider the ideal loss we would use in the streaming case if one had no memory constraints. Let \(\mathcal{L}\) be a given loss function, then the ideal loss for a reconstructed snapshot \(\bm{\widetilde{U}}_{t'}= \inr(\bm{X},t';\bm{\theta})\) is given as
        \begin{equation} \label{eq:ideal-loss}
            \mathcal{L}_\text{ideal}(\inr, t) =\frac{1}{t} \sum_{t'=1}^t \mathcal{L}(\bm{U}_{t'}, \bm{\widetilde{U}}_{t'}),
        \end{equation}
        which requires storing \(\{\bm{U}_{t'}\}_{t'=1}^{t}\) for \(t=1,\ldots,T\) and is thus impractical. Instead, we use \cref{eq:insitu-loss}, for full and sketched batch sizes \(b_f\) and \(b_s\), respectively.
        \begin{equation}\label{eq:insitu-loss}
            \mathcal{L}_\text{insitu}(\inr, t) = \underbrace{\frac{1}{b_f}\sum_{m=1}^{b_f}\mathcal{L}(\bm{U}_{t_m}, \bm{\widetilde{U}}_{t_m})}_{\mathcal{L}_{\text{full}}} + \underbrace{\frac{\lambda}{b_s}\sum_{m=1}^{b_s}\mathcal{L}(\bm{S}_{t_m}\bm{U}_{t_m}, \bm{S}_{t_m}\bm{\widetilde{U}}_{t_m})}_{\mathcal{L}_{\text{sketch}}},
        \end{equation}
        where \(\lambda>0\) weights the sketch regularization term. In the case of \(T_f=1\), the first term simplifies significantly. The mini-batch samples are always drawn anew at every iteration of the optimization algorithm, but the sketch \(\bm{S}_{t'}\) is only drawn once per time-step and then reused. It is also important to note that the product \(\bm{S}_{t_m}\bm{U}_{t_m}\) is stored in the sketched buffer, as it cannot be reformed since we may not have saved the full snapshot. The sketched reconstruction \(\bm{S}_{t_m}\bm{\widetilde{U}}_{t_m}\) is formed by sketching the full INR reconstruction, because only subsampling sketches can be applied by evaluating the network at a specific set (the subsample) of points.
        
        \begin{algorithm}[ht]
            \setstretch{1.4}
            \caption{\textit{In Situ} Training with Sketching}\label{alg:insitu-training}
            \KwData{Sketch size \(k\), buffer sizes \(T_f\) \& \(T_s\), batch sizes \(b_f\le T_f\) \& \(b_s\le T_s\)}
            \(\inr \gets \text{build\_model}(~)\) \tcp*{Construct INC neural network}
            \(\bm{B}_{f} \gets \text{queue}(T_f)\) \tcp*{Full queue}
            \(\bm{B}_{s} \gets \text{queue}(T_s)\) \tcp*{Sketched queue}
            \While{simulation is running}{
                \Case{new snapshot \(\bm{U}_t\) is available}{
                    \(\bm{S}_t \gets \text{construct\_sketch}(k)\) \tcp*{Construct sketch (storing seed)}
                    \(\text{push}(\bm{B}_{f}, \bm{U}_t)\) \tcp*{Push full snapshot into queue}
                    \(\text{push}(\bm{B}_{s}, \bm{S}_t\bm{U}_t)\) \tcp*{Push sketched snapshot into queue}
                }
                \For{max training cycles}{
                    \(\bm{U}_f \gets \text{minibatch}(\bm{B}_{f}, b_f)\) \tcp*{Sub-sample batch from full queue}
                    \revb{
                    \(\bm{\widehat{U}}_s \gets \text{minibatch}(\bm{B}_{s}, b_s)\) \tcp*{Sub-sample batch from sketched queue}
                    \(\text{compute\_loss}(\inr;\bm{U}_f,\bm{\widehat{U}}_s)\) \tcp*{Via \cref{eq:insitu-loss} with explicit minibatches}
                    \(\text{update\_model}(\inr)\) \tcp*{Optimizer step to update parameters}
                    }
                }
            }
        \end{algorithm}

        \subsection{Sketching}\label{sec:sketching}

            Sketches may be deterministic or randomized. Often, in the latter case, we use random matrix ensembles based on specific distributions such as the Gaussian. Random subsampling can also be cast in this form by choosing a random subset of the \(n\)-dimensional standard basis. Gaussian ensembles generally perform well and have a plethora of theoretical results, but require slow, dense matrix operations. On the other hand, subsampling is extremely fast but can perform quite poorly in specific scenarios. In the worst-case, subsampling may select points that are the least helpful for the desired task, whereas dense sketches always aggregate information from all points. However, subsampling does preserve part of the mesh structure, as it is coarsening instead of aggregating, which can be beneficial in certain settings. Deterministic sketches with this preservation property have been studied in \cite{dunton2021deterministic}, with the idea that sketching operators informed by the data itself can outperform, both in speed and approximation quality, randomized sketches which are usually data oblivious.

            It is important to make note of two aspects of a practical application of sketching. First, it is likely that the data or signal \(\bm{u}\) has more than one channel, in which case the same sketch/transform is broadcast across them. Second, when the sketch is random, a random seed is generated for each snapshot and saved in the buffer alongside the sketched snapshot itself. When sketching for the loss in \cref{eq:insitu-loss}, this seed is used in the random number generation for that sketch to ensure it is the same as the one used originally for the data. This does incur a minor amount of extra memory overhead due to the \(T_s\) seed integers that must be stored alongside the sketched samples themselves, but this is minimal and avoids storing the entire sketch itself.

            We elect to use the fast Johnson-Lindenstrauss transform (FJLT) \cite{ailon2009fast} due to its superior speed and empirical performance compared to other possible options. However, we will also investigate the performance of subsampling in \cref{sec:experiments} --- and it is also used as a subroutine for the FJLT sketch --- so we present this in \cref{alg:subsample}. The FJLT is a Johnson-Lindenstrauss transform (JLT): a transformation that projects a finite set of points into many fewer dimensions while approximately preserving pairwise relative distances. The JL result \cref{thm:jlt} proves the existence of such a mapping, and any such mapping that satisfies this lemma may be referred to as a JL transform.

            \begin{theorem}[Johnson-Lindenstrauss Transformations \cite{johnson1984extensions}]\label{thm:jlt}
                For any $\epsilon>0$ and $n\in \mathbb{N}$, there exists a probability distribution \(\mathcal{D}\) with support in \(\reals^{k\times n}\), where \(k\leq \min\{n,\OO(\epsilon^{-2}\log(m))\}\), such that for any set of $m$ points \(\{\bm{p}\}_{i=1}^m\subset\reals^n\), there exists $1>\delta>0$ such that if \(\bm{S}\sim \mathcal{D}\) then the following holds with probability at least \(1-\delta\):
                \begin{equation}\label{eq:jlt}
                    (1-\epsilon)\|\bm{p}_i-\bm{p}_j\|_2^2 \leq \|\bm{S}\bm{p}_i-\bm{S}\bm{p}_j\|_2^2 \leq (1+\epsilon)\|\bm{p}_i-\bm{p}_j\|_2^2 \quad \forall i,j\in[m].
                \end{equation}
                In particular, for sufficiently large dimensions, using the probabilistic method, we see that there exists a linear map \(\bm{S}: \reals^n \to\reals^k\) with \(k=\OO(\epsilon^{-2}\log(m))\) such that \cref{eq:jlt} holds.  
            \end{theorem}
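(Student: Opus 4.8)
The plan is to prove \cref{thm:jlt} by the classical random-projection argument: exhibit a distribution $\mathcal{D}$ whose samples are, with high probability, simultaneously near-isometric on the finitely many difference vectors $\bm{p}_i-\bm{p}_j$, and then pass to a deterministic map via the probabilistic method. Concretely, I would take $\mathcal{D}$ to be the law of $\bm{S}=k^{-1/2}\bm{R}$, where $\bm{R}\in\reals^{k\times n}$ has i.i.d.\ entries drawn either from $\mathcal{N}(0,1)$ or uniformly from $\{-1,+1\}$; both work, and the Gaussian case is cleanest for the computations below. The trivial regime $\OO(\epsilon^{-2}\log m)\ge n$ is disposed of separately by taking $\bm{S}=\bm{I}_n$, which preserves every distance exactly and gives $k=n\le\min\{n,\OO(\epsilon^{-2}\log m)\}$; so assume henceforth that the target $k$ is below $n$.

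\emph{Step one: single-vector concentration.} Fix a nonzero $\bm{v}\in\reals^n$. Each of the $k$ coordinates of $\bm{S}\bm{v}$ is an independent mean-zero variable with variance $\|\bm{v}\|_2^2/k$, so $\mathbb{E}\,\|\bm{S}\bm{v}\|_2^2=\|\bm{v}\|_2^2$; after normalizing, $k\,\|\bm{S}\bm{v}\|_2^2/\|\bm{v}\|_2^2$ has a $\chi^2_k$ distribution in the Gaussian case, and is a normalized sum of i.i.d.\ sub-exponential variables in general. A Chernoff/Bernstein estimate on the moment generating function then yields
\[
  \Pr\!\left[\,\bigl|\,\|\bm{S}\bm{v}\|_2^2-\|\bm{v}\|_2^2\,\bigr|>\epsilon\,\|\bm{v}\|_2^2\,\right]\;\le\;2\exp\!\left(-c_0\,k\,(\epsilon^2-\epsilon^3)\right)
\]
for an absolute constant $c_0>0$ and all $\epsilon\in(0,1)$. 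This is the technical heart of the argument and the step I expect to be the main obstacle: one must obtain a genuinely two-sided tail with the correct $\epsilon^2$ scaling, the $-\epsilon^3$ correction (coming from expanding $\log(1-2s)$ in the MGF) being exactly what makes $k=\Theta(\epsilon^{-2}\log m)$ suffice rather than something larger.

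\emph{Step two: union bound and choice of $k$.} Apply the display to each of the $\binom{m}{2}$ difference vectors $\bm{v}_{ij}=\bm{p}_i-\bm{p}_j$ and union bound: the event that \cref{eq:jlt} fails for some pair has probability at most $\delta:=2\binom{m}{2}\exp(-c_0k(\epsilon^2-\epsilon^3))<m^2\exp(-c_0k(\epsilon^2-\epsilon^3))$. Taking $k=\big\lceil C\,\epsilon^{-2}\log m\big\rceil$ for a sufficiently large absolute constant $C$ makes $\delta<1$ (indeed $\delta\to0$ as $m\to\infty$), which gives both the stated bound $k\le\min\{n,\OO(\epsilon^{-2}\log m)\}$ and the existence of some $1>\delta>0$ such that \cref{eq:jlt} holds with probability at least $1-\delta$ when $\bm{S}\sim\mathcal{D}$.

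\emph{Step three: probabilistic method.} Since $1-\delta>0$, the set of matrices in $\reals^{k\times n}$ satisfying \cref{eq:jlt} for the given point configuration is nonempty, so there exists a deterministic linear map $\bm{S}:\reals^n\to\reals^k$ with $k=\OO(\epsilon^{-2}\log m)$ achieving the claimed distortion; this establishes the final assertion. Apart from the concentration estimate, the remaining work is bookkeeping — tracking how $C$ must grow with the desired slack in $\delta$, and keeping the $\min\{n,\cdot\}$ truncation so the claim is non-vacuous when $m$ is small relative to $n$.
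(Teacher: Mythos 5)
Your proposal is correct, but note that the paper does not actually prove \cref{thm:jlt}: it is quoted as a classical result with a citation, and the only remark made about its proof is that the original construction takes \(\bm{S}\) to be an orthonormalized Gaussian random matrix (a scaled random ortho-projector), for which \cref{eq:jlt} holds with \(k=\OO(\epsilon^{-2}\log m)\). Your route is the now-standard alternative: i.i.d.\ Gaussian (or Rademacher) entries scaled by \(k^{-1/2}\), a \(\chi^2\)/sub-exponential Chernoff bound giving the two-sided tail \(2\exp(-c_0k(\epsilon^2-\epsilon^3))\), a union bound over the \(\binom{m}{2}\) difference vectors, and the probabilistic method, with the regime \(\OO(\epsilon^{-2}\log m)\ge n\) handled by \(\bm{S}=\bm{I}_n\), which correctly accounts for the \(\min\{n,\cdot\}\) truncation in the statement. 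The two approaches buy slightly different things: the original projection-onto-a-random-subspace argument is geometrically natural and is what the cited reference does, but its concentration step (length of a fixed vector under projection onto a random \(k\)-dimensional subspace) is more delicate; your i.i.d.-entry construction reduces Step one to an elementary moment-generating-function computation, yields explicit constants, extends verbatim to sign matrices, and makes manifest that \(\mathcal{D}\) is oblivious --- it depends only on \(n\) and \(k\) (hence on \(m,\epsilon\) through the choice of \(k\)) and not on the points themselves, which is precisely the quantifier order asserted in \cref{thm:jlt} and the property the paper's \emph{in situ} sketching scheme relies on. One small caution: with the exponent written as \(\epsilon^2-\epsilon^3\) the bound degenerates as \(\epsilon\to 1\), so either restrict to \(\epsilon\in(0,1/2]\) or use the sharper \(\epsilon^2/2-\epsilon^3/3\) form; this is cosmetic and does not affect the \(k=\OO(\epsilon^{-2}\log m)\) conclusion.
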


            \begin{definition}[JL Transform]\label{def:jlt}
                If \(\mathcal{D}\) is a probability distribution that satisfies the statements of \cref{thm:jlt}, we say \(\mathcal{D}\) is an \((\epsilon,\delta)\)-JL transform (with dimensions and sizes inferred from context). Following conventions in the literature, we often informally refer to a random matrix \(\bm{S}\sim \mathcal{D}\) as a JL transform as well.
            \end{definition}

            Furthermore, the construction of the distribution \(\mathcal{D}\) is explicit, with the JLT first implemented via an orthonormalized Gaussian random matrix, for which \cref{eq:jlt} holds with \(k=\OO(\epsilon^{-2}\log(m))\) \cite{johnson1984extensions}. The practical downside is the slow, dense matrix-vector multiplications. The FJLT, on the other hand, uses a significantly different approach to gain a similarly significant speedup. \Cref{alg:fjlt} describes this process, which uses the orthogonal discrete cosine transform (DCT) to effectively mix rows \cite{avron2010blendenpik}\footnote{Specifically, a DCT of type II or III.}. \Cref{alg:fjlt} satisfies \cref{eq:jlt} with \(k=\OO(\epsilon^{-2}\log(m(\delta-n^{-\log^3(n)})^{-1})\log^4(n))\) (see, e.g., \cite{huang2021spectral}) and, due to the DCT, only incurs an \(\OO(n\log(n))\) time cost since the DCT can be applied using fast Fourier transform techniques. As noted earlier, the random seed for the Rademacher random variable and subsampling are saved so that the same FJLT transform can be reused later.

            \begin{algorithm}
                \setstretch{1.4}
                \caption{Subsampling-Sketch \(\bm{S}_\text{subsample}\bm{U}\)}\label{alg:subsample}
                \KwData{Data \(\bm{U}\in\reals^{n\times c}\), projected dimension \(k\leq n\)}
                \(\Omega \gets \operatorname{random\_choice}(\{1,\ldots,n\},k)\) \tcp*{Select \(k\) indices uniformly without replacement}
                \Return \(\bm{U}_{\Omega,:}\) \tcp*{Sample rows}
            \end{algorithm}
            
            \begin{algorithm}
                \setstretch{1.4}
                \caption{FJLT-Sketch \(\bm{S}_\text{FJLT}\bm{U}\)}\label{alg:fjlt}
                \KwData{Data \(\bm{U}\in\reals^{n\times c}\), projected dimension \(k\leq n\)}
                \(\bm{d} \gets \operatorname{rademacher}(n)\in\{\pm 1\}^n\) \tcp*{Independent, uniformly random signs}
                  \For{$i\gets1$ \KwTo $c$}{
                \(\widehat{\bm{U}}_{:,c} \gets \sqrt{n/k}\operatorname{DCT}(\bm{U}_{:,c}\odot\bm{d})\) \tcp*{\(\odot\) is element-wise/Hadamard multiplication}
                }
                \Return \(\operatorname{Subsampling-Sketch}(\widehat{\bm{U}},k)\) \;
            \end{algorithm}

        \subsection{JL Regularization for \textit{In Situ} Learning}\label{sec:theory}
        
            The goal of including the sketch loss in \cref{eq:insitu-loss} is to prevent forgetting past information from previously encountered samples and thus losing reconstruction accuracy. \Cref{sec:sketching} suggests informally that for a sketch satisfying \cref{thm:jlt}, we might be able to preserve the information embedded in the sketched data. But why should a sketch-based loss effectively serve as a regularization against catastrophic forgetting? In our main theoretical result, \cref{thm:jlt-surrogate}, we will see that a sketch-based loss can serve as a surrogate to the true loss, i.e., that evaluated on the full data, for past samples.
            
            First, we should clarify, for any given sketch, what exactly is being sketched. At first glance, for snapshot \(t\), this is easily identified as the true data \(\bm{U}_t\) and the reconstructed output of the INR on the entire mesh \(\bm{\widetilde{U}}_t=\inr(\bm{X},t)\). However, this ignores the dependence of the INR on the parameters \(\bm{\theta}\), which means we are sketching \(\inr(\bm{X},t;\bm{\theta})\) for any possible \(\bm{\theta}\). This violates \cref{thm:jlt} because we are no longer sketching a finite set of points, given that the parameters are continuous. However, as we see in \cref{thm:manifold-jlt}, one can extend the Johnson-Lindenstrauss result to sketches on manifolds with an infinite number of points. 

            We refer to \cite{Lee_manifoldBook2ndEd} for an extensive treatment of manifolds, but it is sufficient for our purposes to understand that a Riemannian manifold can be viewed as a subset of Euclidean space and behaves locally like a vector space. Informally, we are considering (possibly) lower-dimensional geometric structures embedded in some ambient subset of \(\reals^N\).

            \begin{theorem}[Manifold JL Transform \cite{baraniuk2009random}]\label{thm:manifold-jlt}
                Let \(\mathcal{M}\) be a compact \(M\)-dimensional Riemannian manifold of \(\reals^N\) having volume \(V\), condition number \(\tau^{-1}\), and geodesic covering regularity \(R\). Fix \(\epsilon,\delta\in(0,1)\) and let \(\bm{S}\) be a normalized random ortho-projector\footnote{A random \(k\times N\) matrix with orthogonal rows normalized by \(\sqrt{N/k}\).} from \(\reals^N\) to \(\reals^k\) which satisfies the following:
                \[k = \OO\left(\frac{M\log\left(NVR\tau^{-1}\epsilon^{-1}\right)\log\left(\delta^{-1}\right)}{\epsilon^2}\right).\]
                If \(k\leq N\), then with probability at least \(1-\delta\) the following holds for \(\bm{p},\bm{q}\in\mathcal{M}\):
                \[(1-\epsilon)\|\bm{p}-\bm{q}\|_2 \leq \|\bm{Sp}-\bm{Sq}\|_2 \leq (1+\epsilon)\|\bm{p}-\bm{q}\|_2.\]
            \end{theorem}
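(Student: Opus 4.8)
The plan is to follow the covering/chaining strategy of \cite{baraniuk2009random}: although the statement concerns the uncountable set of all secant directions of $\mathcal{M}$, one reduces it to the finite Johnson--Lindenstrauss lemma (\cref{thm:jlt}, in its norm-preservation form, which the normalized random ortho-projector $\bm{S}$ satisfies with the stated $k$ — apply \cref{thm:jlt} to a point set together with the origin) applied to a carefully chosen finite collection of points and tangent directions, and then use the curvature control encoded in the reach $\tau$ (the condition number being $\tau^{-1}$) to transfer the guarantee from that finite sample to all of $\mathcal{M}$.

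\textbf{Discretization.} First I would use the geodesic covering regularity $R$ together with the volume $V$ to build a finite $\rho$-net $A \subset \mathcal{M}$ at a resolution $\rho$ (proportional to $\epsilon\tau$, fixed at the end), with $\log|A| = \OO(M\log(VR/\rho))$. Independently, at each anchor $\bm{a}\in A$ I would take a finite net $B_{\bm a}$ of the unit sphere in the tangent space $T_{\bm a}\mathcal{M}$, each of size $(\OO(1)/\rho')^{M}$. The finite set to which JL is applied is $A$ together with the short displacements $\{\bm{a} + \rho'\bm{v} : \bm{a}\in A,\ \bm{v}\in B_{\bm a}\}$ (and the origin); its log-cardinality is $\OO(M\log(NVR\tau^{-1}\epsilon^{-1}))$, which is exactly the factor appearing in the claimed bound on $k$.

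\textbf{Finite JL and transfer.} Applying \cref{thm:jlt} to this finite set, with probability at least $1-\delta$ the map $\bm{S}$ preserves, up to multiplicative error $\OO(\epsilon)$, all pairwise distances among anchors and all norms $\|\bm{S}\bm{v}\|_2$ for the sampled unit tangent vectors $\bm v$; the $\log(\delta^{-1})$ in the bound on $k$ pays for the union bound over these polynomially-many events. Now fix $\bm{p},\bm{q}\in\mathcal{M}$ and split on whether $\|\bm{p}-\bm{q}\|_2 \ge c\tau$ or not. In the far regime, choose anchors $\bm{p}',\bm{q}'$ within $\rho$ of $\bm{p},\bm{q}$; since the $\OO(\rho)$ errors are a small fraction of $\|\bm{p}-\bm{q}\|_2$, the triangle inequality plus the preserved value of $\|\bm{S}(\bm{p}'-\bm{q}')\|_2$ gives the two-sided bound. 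In the near regime, the reach $\tau$ bounds the angle between the secant direction $(\bm{p}-\bm{q})/\|\bm{p}-\bm{q}\|_2$ and $T_{\bm p}\mathcal{M}$ by $\OO(\|\bm{p}-\bm{q}\|_2/\tau)=\OO(\epsilon)$; this secant direction is therefore $\OO(\epsilon)$-close to an element of some $B_{\bm a}$ at a nearby anchor, and combining the preserved norm of that tangent vector with the operator-norm bound $\|\bm{S}\|_2\le\sqrt{N/k}$ on the residual perturbation yields $\|\bm{S}(\bm{p}-\bm{q})\|_2/\|\bm{p}-\bm{q}\|_2 \in [1-\epsilon,1+\epsilon]$.

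\textbf{Obstacle.} The real work is calibrating the near regime: one must simultaneously control (i) how fast the tangent spaces of $\mathcal{M}$ rotate, again via $\tau$, so that a tangent net placed only at the anchors still $\OO(\epsilon)$-covers every tangent direction, and (ii) the second-order deviation between a true short geodesic secant and the straight tangent displacement $\rho'\bm{v}$ — all while refusing to let $\rho,\rho'$ shrink so fast that $\log|A|$ exceeds the claimed logarithmic dependence and forces a larger $k$. Back-solving for $\rho,\rho'$ and the internal accuracy constants so that every accumulated error is absorbed into the final $\epsilon$, with $\log|A \cup (A+B)| = \OO(M\log(NVR\tau^{-1}\epsilon^{-1}))$, is the crux of the argument, and it is where $V$, $R$, and $\tau^{-1}$ each enter the bound on $k$.
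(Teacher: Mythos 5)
This statement is not proved in the paper at all: it is imported verbatim (with a normalization adjustment) from \cite{baraniuk2009random}, so there is no in-paper proof to compare against. Your outline is, in essence, the proof strategy of that cited source --- a geodesic net of anchor points built from \(V\) and \(R\), tangent-sphere nets at each anchor, a finite JL/union-bound step paying the \(M\log(NVR\tau^{-1}\epsilon^{-1})\log(\delta^{-1})\) factor, and a near/far case split in which the condition number \(\tau^{-1}\) controls both the secant-to-tangent angle and the rotation of tangent planes, with the crude amplification bound \(\|\bm{S}\|_2\le\sqrt{N/k}\) absorbing the residuals (this is precisely where the \(\log N\) enters, since the net resolution must scale like \(\epsilon\sqrt{k/N}\) to beat it). Two caveats are worth stating. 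First, \cref{thm:jlt} as written in the paper is only an existence statement about some distribution \(\mathcal{D}\); to run your finite step for the specific normalized random ortho-projector you need the concentration inequality for that ensemble (which is what \cite{baraniuk2009random} actually invokes), not \cref{thm:jlt} itself. Second, your write-up openly defers the calibration of \(\rho,\rho'\) and the accumulation of \(\OO(\epsilon)\) errors, which you correctly identify as the crux; as submitted it is therefore a faithful roadmap of the known proof rather than a verifiable proof, and the quantitative bookkeeping (several pages in the original reference) is exactly what cannot be checked from the sketch.
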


            The volume, condition number, and geodesic covering regularity (the latter two of which are defined in \cite{baraniuk2009random}) together define the topological regularity of the manifold. Note that this result, as originally stated, includes a factor of \(\sqrt{k/N}\) as the sketch is not assumed to be normalized. \Cref{thm:manifold-jlt} presents a very similar result to \cref{thm:jlt}, but where the size of the sketch depends on geometric properties of the manifold, most importantly the intrinsic dimension, and not on the collection of points being sketched. This means that for low-dimensional manifolds of a potentially high-dimensional ambient space, one can construct a small JL transform for an arbitrarily large number of points. The size \(k\) depends only logarithmically on the ambient dimension \(N\), and in fact, this dependence can be removed completely as shown in subsequent works \cite{ArminMike2015}. This ability to construct JL transformations for manifolds is key to our main result, which we present next.

            \begin{theorem}[JL Surrogate]\label{thm:jlt-surrogate}
                Define \(\mathcal{L}\) as the squared \(\ell_2\) (i.e., unnormalized MSE) loss. For a fixed mesh \(\bm{X}\) and time \(t\in[T]\), with parameters \(\bm{\theta}\in\reals^N\), suppose the map \(\inr:\reals^N\to\mathcal{M}\subset\reals^{n\times c}\), defined via the neural network as \(\inr(\bm{\theta}):=\inr(\bm{X},t;\bm{\theta})\), maps to the Riemannian manifold \(\mathcal{M}\). Further, suppose the sketch \(\bm{S}_t:\reals^{n\times c}\to\reals^{k\times c}\) is a \((\epsilon,\delta)\)-JL transform on the shifted manifold \(\mathcal{M}-\{\bm{U}_t\}\), then, \(\forall \bm{\theta}\in\reals^N\), the following holds with probability at least \(1-\delta\):
                \[\mathcal{L}(\bm{U}_t,\inr(\bm{\theta})) \leq \frac{1}{1-\epsilon}\mathcal{L}(\bm{S}\bm{U}_t,\bm{S}\inr(\bm{\theta}));\]
                and, also with probability at least \(1-\delta\),
                \[\mathcal{L}(\bm{U}_t,\inr(\bm{\theta}_S^\star)) \leq \frac{1+\epsilon}{1-\epsilon} \min_{\bm{\theta}} \mathcal{L}(\bm{U}_t,\inr(\bm{\theta})),\]
                where \(\bm{\theta}_S^\star\) is a minimizer of the sketched loss.
            \end{theorem}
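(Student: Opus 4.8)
The plan is to reduce both inequalities to a single invocation of the manifold Johnson--Lindenstrauss guarantee and then chain the standard ``sketch-and-solve'' estimates. Throughout, write $\mathcal{L}(\bm{A},\bm{B})=\|\bm{A}-\bm{B}\|^2$ for the squared $\ell_2$ (Frobenius) loss, and interpret the $(\epsilon,\delta)$-JL property in its squared-norm form, consistent with \cref{thm:jlt} and \cref{def:jlt}; this is the normalization under which the constants $\tfrac{1}{1-\epsilon}$ and $\tfrac{1+\epsilon}{1-\epsilon}$ come out exactly, and passing from the unsquared bound of \cref{thm:manifold-jlt} to this form merely reparametrizes $\epsilon$. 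Recall that $\bm{S}=\bm{S}_t$ acts on $\reals^{n\times c}$ by broadcasting one linear map across the $c$ channels, so $\mathcal{L}(\bm{S}\bm{U}_t,\bm{S}\inr(\bm{\theta}))=\|\bm{S}(\bm{U}_t-\inr(\bm{\theta}))\|^2$.

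\emph{Step 1 (pass to the translated manifold).} For every $\bm{\theta}\in\reals^N$ the output $\inr(\bm{\theta})$ lies in $\mathcal{M}$, so the residual $\inr(\bm{\theta})-\bm{U}_t$ lies in the translate $\mathcal{M}-\{\bm{U}_t\}$. Translation by $-\bm{U}_t$ is a Euclidean isometry, hence $\mathcal{M}-\{\bm{U}_t\}$ is again a compact $M$-dimensional embedded Riemannian manifold with the same volume, condition number, and geodesic covering regularity as $\mathcal{M}$; \cref{thm:manifold-jlt} therefore applies to it, and the hypothesis that $\bm{S}_t$ is an $(\epsilon,\delta)$-JL transform on $\mathcal{M}-\{\bm{U}_t\}$ says exactly that, on an event of probability at least $1-\delta$,
\[(1-\epsilon)\|\bm{p}\|^2 \le \|\bm{S}\bm{p}\|^2 \le (1+\epsilon)\|\bm{p}\|^2 \qquad \text{for all } \bm{p}\in\mathcal{M}-\{\bm{U}_t\}.\]
Crucially this event is independent of $\bm{\theta}$, and the bound is uniform over the entire (infinite) manifold.

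\emph{Step 2 (the two inequalities).} Work on that event. For the first, fix $\bm{\theta}$ and apply the lower bound to $\bm{p}=\inr(\bm{\theta})-\bm{U}_t$; since $\bm{S}$ is linear and $\|\cdot\|$ is even, $\|\bm{p}\|=\|\bm{U}_t-\inr(\bm{\theta})\|$ and $\|\bm{S}\bm{p}\|=\|\bm{S}(\bm{U}_t-\inr(\bm{\theta}))\|$, so $(1-\epsilon)\,\mathcal{L}(\bm{U}_t,\inr(\bm{\theta}))\le\mathcal{L}(\bm{S}\bm{U}_t,\bm{S}\inr(\bm{\theta}))$; dividing by $1-\epsilon>0$ gives the first claim, simultaneously for all $\bm{\theta}$. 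For the second, pick $\bm{\theta}^\star\in\argmin_{\bm{\theta}}\mathcal{L}(\bm{U}_t,\inr(\bm{\theta}))$ and chain
\begin{align*}
\mathcal{L}(\bm{U}_t,\inr(\bm{\theta}_S^\star))
&\le \frac{1}{1-\epsilon}\,\mathcal{L}(\bm{S}\bm{U}_t,\bm{S}\inr(\bm{\theta}_S^\star))
\le \frac{1}{1-\epsilon}\,\mathcal{L}(\bm{S}\bm{U}_t,\bm{S}\inr(\bm{\theta}^\star)) \\
&\le \frac{1+\epsilon}{1-\epsilon}\,\mathcal{L}(\bm{U}_t,\inr(\bm{\theta}^\star))
= \frac{1+\epsilon}{1-\epsilon}\min_{\bm{\theta}}\mathcal{L}(\bm{U}_t,\inr(\bm{\theta})),
\end{align*}
where the first inequality is the first claim at $\bm{\theta}_S^\star$, the second is optimality of $\bm{\theta}_S^\star$ for the sketched objective, and the third is the upper bound of Step 1 applied to $\bm{p}=\inr(\bm{\theta}^\star)-\bm{U}_t$. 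This is the second claim.

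\emph{Main obstacle.} The one genuinely delicate point is the reduction in Step 1: a JL transform in the sense of \cref{thm:jlt}/\cref{def:jlt} preserves \emph{pairwise} distances among the embedded points, whereas here we need the \emph{norm} $\|\bm{p}\|=\|\bm{p}-\bm{0}\|$ controlled for every $\bm{p}$ on the translated manifold. If $\bm{U}_t\in\mathcal{M}$ --- i.e., the INR can represent the snapshot exactly --- then $\bm{0}\in\mathcal{M}-\{\bm{U}_t\}$ and the norm bound follows directly from \cref{thm:manifold-jlt}. In general one augments the embedded set with the single point $\bm{0}$: re-running the geodesic-covering/union-bound argument behind \cref{thm:manifold-jlt} on $(\mathcal{M}-\{\bm{U}_t\})\cup\{\bm{0}\}$ adds only one point to the net, so the required sketch dimension $k$ and the $1-\delta$ failure bound are unchanged up to constants. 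Two further points are already granted by the hypotheses and need no argument: $\mathcal{M}$ is taken to be a genuine compact embedded Riemannian manifold rather than a possibly singular image of $\reals^N$, and it is the channel-broadcast operator $\bm{S}_t\otimes\bm{I}_c$ on the vectorized manifold that is assumed to satisfy the manifold JL property. With Step 1 secured, Step 2 is pure algebra.
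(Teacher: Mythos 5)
Your proof is correct and follows essentially the same route as the paper's: apply the (manifold) JL lower bound to the residual \(\inr(\bm{\theta})-\bm{U}_t\) on the shifted manifold to get the first inequality, then chain it with optimality of \(\bm{\theta}_S^\star\) for the sketched objective and the JL upper bound at \(\bm{\theta}^\star\) to get the \(\tfrac{1+\epsilon}{1-\epsilon}\) bound. If anything, your Step 1 is more careful than the paper's argument, which silently uses norm (rather than pairwise-distance) preservation on \(\mathcal{M}-\{\bm{U}_t\}\) and does not comment on the squared-versus-unsquared normalization of \(\epsilon\) in \cref{thm:manifold-jlt}; your augmentation-by-\(\bm{0}\) remark patches exactly that point.
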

            
            \begin{proof}                
                The unnormalized MSE loss is given as \(\mathcal{L}(\bm{p},\bm{q})=\|\bm{p}-\bm{q}\|_2^2\). Via the left inequality of \cref{thm:manifold-jlt},
                \begin{equation*}
                    \mathcal{L}(\bm{U}_t,\inr(\bm{\theta})) = \|\bm{U}_t-\inr(\bm{\theta})\|_2^2 \leq \frac{1}{1-\epsilon} \|\bm{S}\left(\bm{U}_t-\inr(\bm{\theta})\right)\|_2^2 = \frac{1}{1-\epsilon} \mathcal{L}(\bm{S_t}\bm{U}_t,\bm{S}_t\inr(\bm{\theta})),
                \end{equation*}
                which is the first desired inequality. Next, define \(\bm{\theta}^\star\) as a minimizer of the full loss \(\bm{\theta}\mapsto \mathcal{L}(\bm{U}_t,\inr(\bm{\theta}))\) and \(\bm{\theta}_S^\star\) as a minimizer of the sketched loss \(\bm{\theta}\mapsto\mathcal{L}(\bm{S}\bm{U}_t,\bm{S}\inr(\bm{\theta}))\). Then, we have the following:
                \begin{align*}
                    \mathcal{L}(\bm{U}_t,\inr(\bm{\theta}_S^\star)) &\leq \frac{1}{1-\epsilon}\mathcal{L}(\bm{S}\bm{U}_t,\bm{S}\inr(\bm{\theta}_S^\star)) &\text{via \cref{thm:jlt-surrogate}}\\
                    &\leq \frac{1}{1-\epsilon}\mathcal{L}(\bm{S}\bm{U}_t,\bm{S}\inr(\bm{\theta}^\star)) & \text{since \(\bm{\theta}_S^\star\) is a minimizer} \\
                    &\leq \frac{1+\epsilon}{1-\epsilon}\mathcal{L}(\bm{U}_t,\inr(\bm{\theta}^\star)), &\text{via right inequality of \cref{thm:manifold-jlt}}
                \end{align*}
                which is the second desired inequality.
            \end{proof}

            The proof of the second inequality is motivated by the informal proof of Theorem 2.14 in \cite{woodruff2014sketching}. Also, note that we construct a JL transform for the shifted manifold \(\mathcal{M}-\{\bm{U}_t\}\), which is just a convenience to simplify the proof, but does not introduce any complexity to the sketch itself. The shifted manifold will have the same geometric properties as \(\mathcal{M}\) that impact the construction of the sketch.

            The first bound in \cref{thm:jlt-surrogate} says that if we can find \(\bm{\theta}\) to make the sketched loss small, then the true loss is also small, or, from another perspective, that for any given snapshot, the sketch loss serves as an approximate surrogate for the true loss. One may observe that the tightness of this bound is controlled by \(0<\epsilon<1\), which also impacts the sketch size from \cref{thm:manifold-jlt}. For our goal of regularization, we may not need \(\epsilon\) to be that small, thus allowing us to use smaller sketches. However, we don't view this as a full explanation of our performance in practice. The trade-off that ideally holds, and which is empirically validated, is that much smaller sketches can still serve well enough as an approximate surrogate to the true loss to be an effective regularizer against forgetting. The actual learning is then performed based on the full snapshots.
            
            The second bound says that we can make the minimizer of the sketched loss nearly as good as the minimizer of the true loss. We will make use of this bound in \cref{sec:experiments} to loosely compare our chosen sketch sizes to what is suggested by the theory.

            \Cref{thm:jlt-surrogate} holds for a single snapshot, but in practice, we are sketching multiple snapshots at once to compute the overall sketch loss. The number of sketches actually depends on how long the process has been running and the size \(T_\text{sketch}\) of the sketch buffer.

            \begin{theorem}[Batch JL Surrogate]\label{thm:batch-jlt-surrogate}
                For a sketch buffer with size \(T_s\), \cref{thm:jlt-surrogate} holds for the batch loss, i.e., the loss averaged over multiple snapshots, when the sketches are constructed with failure probability \(\delta/T_s\).
            \end{theorem}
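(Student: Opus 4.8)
The plan is to obtain \Cref{thm:batch-jlt-surrogate} from \Cref{thm:jlt-surrogate} by a union bound over the (at most) $T_s$ snapshots held in the sketch buffer. First I would fix the indices $t_1,\dots,t_{T_s}$ currently stored in the buffer and, for each $m$, let $\mathcal{M}_{t_m}$ be the image manifold of the parameter-to-output map $\bm{\theta}\mapsto\inr(\bm{X},t_m;\bm{\theta})$ and draw the associated sketch $\bm{S}_{t_m}$ so that it is an $(\epsilon,\delta/T_s)$-JL transform on the shifted manifold $\mathcal{M}_{t_m}-\{\bm{U}_{t_m}\}$. By \Cref{thm:manifold-jlt} this merely replaces the $\log(\delta^{-1})$ factor in the required sketch dimension by $\log(T_s\delta^{-1})$, i.e.\ an additive $\log T_s$ inside one logarithm, which is negligible in practice. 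Let $E_m$ be the event that the two-sided distortion guarantee of \Cref{thm:manifold-jlt} holds for $\bm{S}_{t_m}$ on $\mathcal{M}_{t_m}-\{\bm{U}_{t_m}\}$; then $\Pr[E_m^c]\le\delta/T_s$, and the union bound gives $\Pr\big[\bigcap_{m=1}^{T_s}E_m\big]\ge 1-T_s\cdot(\delta/T_s)=1-\delta$.

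Next, I would work deterministically on the good event $\bigcap_m E_m$ and repeat the two-step argument used to prove \Cref{thm:jlt-surrogate}, but term by term. The left inequality of \Cref{thm:manifold-jlt}---applied exactly as in the proof of \Cref{thm:jlt-surrogate}, to $\bm{U}_{t_m}-\inr(\bm{\theta})$---yields, for every $m$ and every $\bm{\theta}$, $\mathcal{L}(\bm{U}_{t_m},\inr(\bm{\theta}))\le\frac{1}{1-\epsilon}\mathcal{L}(\bm{S}_{t_m}\bm{U}_{t_m},\bm{S}_{t_m}\inr(\bm{\theta}))$. Averaging these inequalities over $m=1,\dots,T_s$ (a sum of valid inequalities with a common multiplicative constant is again valid) gives the batch form of the first bound. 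For the second bound, let $\bm{\theta}^\star$ minimize the batch full loss $\frac{1}{T_s}\sum_m\mathcal{L}(\bm{U}_{t_m},\inr(\bm{\theta}))$ and $\bm{\theta}_S^\star$ minimize the batch sketched loss $\frac{1}{T_s}\sum_m\mathcal{L}(\bm{S}_{t_m}\bm{U}_{t_m},\bm{S}_{t_m}\inr(\bm{\theta}))$; then summing the first bound at $\bm{\theta}_S^\star$, using optimality of $\bm{\theta}_S^\star$ for the batch sketched objective, and summing the right inequality of \Cref{thm:manifold-jlt} at $\bm{\theta}^\star$ reproduces the three-line chain of \Cref{thm:jlt-surrogate} at the level of averaged losses, giving $\frac{1}{T_s}\sum_m\mathcal{L}(\bm{U}_{t_m},\inr(\bm{\theta}_S^\star))\le\frac{1+\epsilon}{1-\epsilon}\min_{\bm{\theta}}\frac{1}{T_s}\sum_m\mathcal{L}(\bm{U}_{t_m},\inr(\bm{\theta}))$.

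I expect the one point requiring care---more a bookkeeping subtlety than an obstacle---to be that the shifted manifolds $\mathcal{M}_{t_m}-\{\bm{U}_{t_m}\}$ genuinely differ across snapshots (different ground truth, and in the hypernetwork setting different parameter-to-output maps), so one cannot invoke a single JL guarantee for a fixed manifold; each $\bm{S}_{t_m}$ must be controlled on its own manifold before the union bound is taken. Two small remarks would round things off: (i) since the bounds hold for all $\bm{\theta}$ simultaneously on the good event, they hold verbatim for any mini-batch sub-sample of size $b_s\le T_s$ drawn from the buffer, so the sketched term of \cref{eq:insitu-loss} actually used in \cref{alg:insitu-training} is covered; and (ii) the $\log T_s$ inflation of the sketch dimension is exactly the (mild) price of controlling all stored sketches at once, consistent with the sketch-size discussion in \cref{sec:experiments}.
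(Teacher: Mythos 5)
Your proposal is correct and follows the same route as the paper: an $(\epsilon,\delta/T_s)$-JL guarantee for each stored sketch on its own (shifted) manifold, a union bound to control all $T_s$ sketches simultaneously with probability $1-\delta$, and then term-wise averaging of the inequalities from \cref{thm:jlt-surrogate}, with the only cost being a logarithmic inflation of the sketch size. The paper states this argument only as a brief remark, so your write-up simply fills in the same details it leaves implicit.
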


            \Cref{thm:batch-jlt-surrogate} extends the result of \cref{thm:jlt-surrogate} to multiple snapshots, which follows from a simple union bound over the individual sketches. To maintain the same probability of success, the probability of failure \(\delta\) for each sketch must be decreased. This can be done at minimal cost due to the logarithmic dependence on this parameter in \cref{thm:manifold-jlt}.

            We make a key assumption in \cref{thm:jlt-surrogate} that \(\inr(\bm{\theta})\) maps to a Riemannian manifold, but is this actually true? Moreover, even if this assumption does hold, are the other relevant geometric properties from \cref{thm:manifold-jlt} --- dimension, volume, condition number, and geodesic covering regularity --- in a way such as to yield a small sketch size? Rigorously answering this question is difficult. Considering just the dimension, we expect this to be dependent on both the data and the network parameters. For the former, in the worst case scenario, the data is entirely uncorrelated, so the dimension of \(\mathcal{M}\) may be as large as \(n\times c\). Similarly, in the best-case scenario, the data is perfectly correlated, so the dimension of \(\mathcal{M}\) could be much smaller. In lieu of a theoretical answer to whether the INR output lies on a low-dimensional manifold, we provide computational evidence in \cref{sec:results} that it indeed does.

    \section{Experiments}\label{sec:experiments}

        We conduct our experiments on the datasets listed in \Cref{tab:datasets}, which highlight several potentially challenging or important attributes. \reva{The Ignition dataset\footnote{\reva{An internal dataset that was also used in \cite{doherty2024quadconv}.}}} describes a fully resolved gas wave front on a 2D uniform grid that transitions from a transport phase to a steady state jet. The Neuron dataset describes a 3D diffusion process in an unstructured, non-Cartesian branching neuron tree~\cite{Angran3D}. The Channel dataset, from the Johns Hopkins Turbulence Database \cite{channel_flow_1, channel_flow_2}, describes 3D turbulent flow on a non-uniform grid. It has been trimmed in space to a \(64^3\) volume and in time to the first 500 snapshots.  We define the notion of a ``sample factor'' as the relative size of the sketch to the mesh size for a given dataset, i.e., \(100\cdot k/n\).

        \begin{table}[ht]
            \centering
            \resizebox{\textwidth}{!}{
            \begin{tabular}{llS[table-format=1.0]S[table-format=3.0]S[table-format=6.0]S[table-format=1.0]S[table-format=4.0,round-mode=places,round-precision=0]}
                \toprule
                 & \textbf{Mesh Type} & \textbf{Spatial Dimension} & {\textbf{Snapshots}} & {\textbf{Mesh Nodes}} & \textbf{Channels} & {\textbf{Memory} (MB)}\\ 
                \textbf{Datasets} & & {\(d\)} & {\(T\)} & {\(n\)} & {\(c\)} & \\ 
                \midrule
                \textbf{Ignition} & uniform & 2 & 450 & 2500 & 4 & 18 \\ 
                \textbf{Neuron} & non-uniform & 3 & 500 & 116943 & 1 & 233.886 \\
                \textbf{Channel} & non-uniform & 3 & 500 & 262144 & 3 & 1572.864 \\
                \bottomrule
            \end{tabular}
            }
            \caption{Dataset descriptions. Total memory is calculated as \num{4e-6} MB per 32-bit floating point value.}
            \label{tab:datasets}
        \end{table}

        \Cref{eq:loss} provides the \revb{snapshot} relative loss function we use for training, \cref{eq:relative-error} defines the full dataset relative Frobenius error, and \cref{eq:psnr} defines the \revb{average snapshot} peak signal to noise ratio (PSNR). The latter two of these three serve as our primary performance metrics.
        \begin{gather}
            \mathcal{L}(\bm{U}_t,\bm{\widetilde{U}}_t) = \frac{1}{C}\sum_{c=1}^C\frac{\|(\bm{U}_t)_{:,c}-\bm{(\widetilde{U}}_t)_{:,c}\|_2}{\|(\bm{U}_t)_{:,c}\|_2} \label{eq:loss}, \\
            \text{RFE}(\bm{U},\bm{\widetilde{U}}) = \frac{1}{C}\sum_{c=1}^C\frac{\|\bm{U}_{:,:,c}-\bm{\widetilde{U}}_{:,:,c}\|_F}{\|\bm{U}_{:,:,c}\|_F} \label{eq:relative-error}, \\
            \revb{\text{PSNR}(\bm{U},\bm{\widetilde{U}}) = \frac{1}{T}\sum_{t=1}^{T}\frac{1}{C}\sum_{c=1}^C 20\cdot\log_{10}\left(\frac{\max_i(\bm{\widetilde{U}}_t)_{i,c}}{\|(\bm{U}_t)_{:,c}-(\bm{\widetilde{U}}_t)_{:,c}\|_2}\right).} \label{eq:psnr}
        \end{gather}
        
        Our experiments are designed to achieve two primary goals. First, to show that we can achieve strong performance, both offline and \textit{in situ}, using our INC methods. Second, and more importantly, to show that we can approximately match the offline results using our \textit{in situ} training paradigm. We do not optimize our models for maximum performance, nor do we further compress the data using other techniques such as network quantization. Further effort will likely yield improved results, but they are not the focus of this paper and are therefore left for future work. Instead, we choose simple models that yield desirable compression rates and relative error of a few percent.
        
        Our work is implemented in PyTorch \cite{pytorch} and available as an open-source GitHub repository \cite{paper-repo}. All training is conducted on a single GPU (A100 or L40S) using the RAdam optimizer, a learning rate of \(10^{-4}\), and in single precision (i.e., 32-bit floating point) \cite{single_precision} for both the parameters and data. Given that our goal is to compress the given data, we do not seek to generalize to unseen data; thus, overfitting is not an issue. Our method is fully \textit{in situ}, so we perform no pre-processing of the data. For all \textit{in situ} results, the size of the full buffer is always \(T_f=1\) (only one full snapshot available at a time), and the size of the sketch buffer is \(T_s=T-1\) (one less than the full time horizon). We do not require sketching the final snapshot because the training loop ends after the last full snapshot has been encoded. The batch size of the sketch buffer and the number of epochs per full snapshot vary between datasets, but are in the range 25--45 and 300--500, respectively. \revb{The sketch regularization weight from \cref{eq:insitu-loss} was manually tuned to \(\lambda=5\) and fixed for all experiments.}

        \subsection{Model Architecture}\label{sec:architecture}

            As alluded to previously, we elect to use a hypernetwork approach for constructing our neural compressor. Practically, this results in two networks working together to map space-time coordinates to a reconstruction of the PDE solution field. \Cref{fig:network} displays the generic INR architecture we employ for both the hypernetwork and the target network. \revb{The ``Hypernet'' maps from the time coordinate \(t\) to parameters and the ``Target INR'' ingests these parameters to define its layers which map from the space-time coordinates \((\bm{x},t)\) to the reconstruction \(\widetilde{\bm{u}}(\bm{x},t)\).}
            
            \begin{figure}[ht!]
                \centering
                \begin{subfigure}[b]{0.48\textwidth}
                    \includegraphics[width=\textwidth]{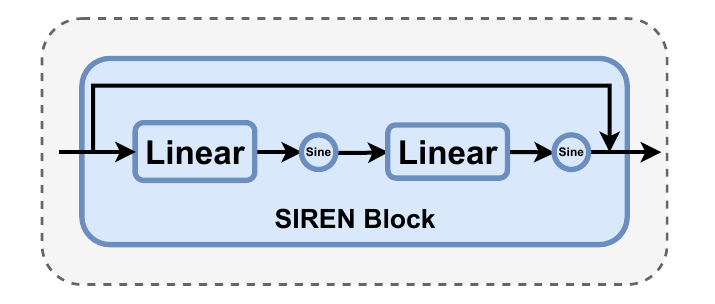}
                    \caption{Network building-block consisting of two linear layers with sine activation functions and a skip connection.}
                    \label{fig:siren-block}
                \end{subfigure}
                \hfill
                \begin{subfigure}[b]{0.48\textwidth}
                    \includegraphics[width=\textwidth]{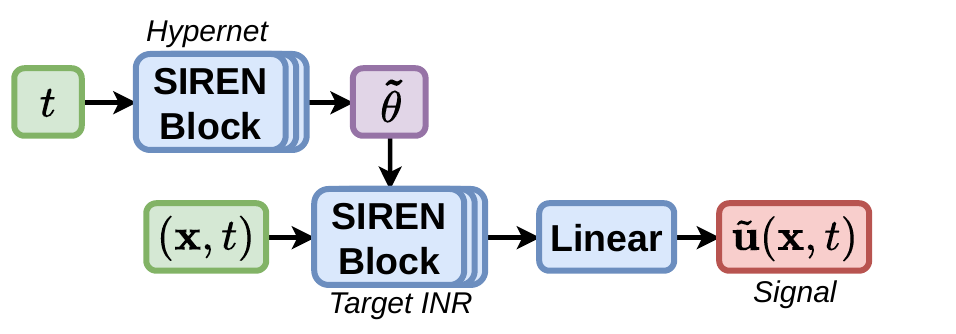}
                    \caption{Overall network structure consisting of the hypernetwork and the target INR.}
                    \label{fig:full-network}
                \end{subfigure}
                \caption{INC network structure.}
                \label{fig:network}
            \end{figure}
            
            Within the sine-layer blocks, we use sinusoidal activation functions with the initialization developed in \cite{sitzmann2020implicit}. We also employ skip connections similar to those of \cite{pan2023neural,sales2024implicit}. Thus, our networks are defined by two hyperparameters: the depth, i.e., the number of blocks, and the width of the internal linear layers. Because we are targeting the \textit{in situ} setting, we cannot effectively normalize the data to ensure it is in some specific range, so no activation is used at the final linear layer. This ensures that the target network can produce any signal value. The lack of activation is also useful for the hypernetwork, as we construct the initial output of the hypernetwork to match the initialization of the target INR. To accomplish this, we use the strategy of \cite{han2023kd}, wherein the bias of the final linear layer in the hypernetwork is set to the exact values given by the initialization of the target INR, and the weights of the last linear layer are scaled by a relatively small factor.
    
            We should note that many more tricks and techniques exist in the literature for both hypernetworks and implicit neural representation, much of which was discussed in \cref{sec:preliminaries}. Our architecture may benefit from these alterations. As an example, a more sophisticated hypernetwork architecture that does not map to all target parameters at once would be far more size-efficient, thus increasing our overall compression rate. However, this is left to future work.

        \subsection{Results}\label{sec:results}
        
            To put our proposed method, coupling \Cref{alg:insitu-training} (buffer) with either \Cref{alg:subsample} (subsampling) or \Cref{alg:fjlt} (FJLT), henceforth called ``InSitu-FJLT'' or ``InSitu-Subsample'' respectively, into context, we compare it with some baseline methods, both offline and \textit{in situ}. We note that all the models use the same overall architecture but differ in how they are trained. Within each dataset, all models have the same number and size of SIREN blocks, and hence have the same compression rate. For offline methods, we first compare with a pure INC model that has access to all the data, using the loss \(\mathcal{L}(\inr,T)\) from \cref{eq:ideal-loss}, and using mini-batches in time for training. This sketching-free method, which we label as ``Offline-Baseline'', represents the best accuracy we could hope for using the INC framework. We also compare with ``Offline-Subsample'' and ``Offline-FJLT'', which have access to all time points \(t\in[T]\) but the spatial dimensions of \(\bm{U}_t\) are sketched, either via \Cref{alg:subsample} or \Cref{alg:fjlt}, respectively. The subsampling sketch has no theoretical guarantees, but for the FJLT sketch, if the manifold hypotheses are valid and we choose sketches of size \(k\) to give an \(\epsilon\)-JLT embedding, then we would expect these errors to be roughly a factor of \(\frac{1+\epsilon}{1-\epsilon}\) worse than the full offline baseline. The ``InSitu-Baseline'' refers to using no sketched buffer at all (i.e.,  \(T_s=0\)), hence it is prone to catastrophic forgetting.
            
            \begin{table}[htbp]
                \centering
                \begin{subtable}[]{\textwidth}
                    \centering
                    \resizebox{\textwidth}{!}{
                    \begin{tabular}{lS[table-format=2.2(2.1)]S[table-format=2.2(2.1)]S[table-format=2.2(2.1)]}
                        \toprule
                        \textbf{Method} & \textbf{Ignition (142\(\times\))} & \textbf{Neuron (1882\(\times\))} & \textbf{Channel (682\(\times\))} \\
                        \midrule
                        Offline-Baseline (full dataset) & 2.17(0.19) & 0.57(0.05) & 5.37(0.36) \\
                        Offline-Subsample (sketched dataset, \Cref{alg:subsample}) & 4.04(0.35) & 14.1(20.4) & 7.67(0.46) \\
                        Offline-FJLT (sketched dataset, \Cref{alg:fjlt}) & 3.87(0.49) & 2.88(2.28) & 7.30(0.26) \\
                        \midrule
                        InSitu-Baseline (no sketched buffer) & 84.2(7.2) & 125.0(23.3) & 77.2(1.6) \\
                        InSitu-Subsample, \Cref{alg:insitu-training} + \Cref{alg:subsample} & 3.89(2.11) & 21.0(36.3) & 5.27(0.46) \\
                        InSitu-FJLT, \Cref{alg:insitu-training} + \Cref{alg:fjlt} & 2.64(0.85) & 0.75(0.10) & 5.22(0.37) \\
                        \bottomrule
                    \end{tabular}
                    }
                    \caption{RFE \(\downarrow\) (\%)}
                \end{subtable}
                \vfill
                \vspace{\baselineskip}
                \begin{subtable}[]{\textwidth}
                    \centering
                    \resizebox{\textwidth}{!}{
                    \begin{tabular}{lS[table-format=2.2(2.2)]S[table-format=2.2(2.2)]S[table-format=2.2(2.2)]}
                        \toprule
                        \textbf{Method} & \textbf{Ignition (142\(\times\))} & \textbf{Neuron (1882\(\times\))} & \textbf{Channel (682\(\times\))} \\
                        \midrule
                        Offline-Baseline (full dataset) & 41.6(0.6) & 60.4(0.4) & 37.4(0.4) \\
                        Offline-Subsample (sketched dataset, \Cref{alg:subsample}) & 37.3(1.1) & 35.4(5.6) & 34.3(0.5) \\
                        Offline-FJLT (sketched dataset, \Cref{alg:fjlt}) & 37.1(0.9) & 48.8(6.4) & 34.6(0.2) \\
                        \midrule
                        InSitu-Baseline (no sketched buffer) & 9.28(1.84) & 14.5(1.6) & 14.3(0.44) \\
                        InSitu-Subsample, \Cref{alg:insitu-training} + \Cref{alg:subsample} & 40.2(1.6) & 40.7(11.8) & 37.4(0.6) \\
                        InSitu-FJLT, \Cref{alg:insitu-training} + \Cref{alg:fjlt} & 41.7(1.2) & 58.2(0.82) & 37.4(0.5) \\
                        \bottomrule
                    \end{tabular}
                    }
                    \caption{PSNR \(\uparrow\) (dB)}
                \end{subtable}
                \caption{Performance metrics on test datasets Ignition, Neuron, and Channel at 5\%, 1\%, and 2\% sketch sample factors (where relevant) --- \reva{which equates to equivalent offline storage costs for 22.5, 5, and 10 full snapshots} --- respectively. Metric statistics are reported as mean plus-or-minus standard deviation from five identical trials. The numbers in the parentheses of the first row of each table are compression ratios.}
                \label{tab:performance}
            \end{table}
            
            \Cref{tab:performance} presents PSNR and RFE performance results on all three datasets in the offline and \textit{in situ} settings. The key observation to be made is that sketching-based regularization often yields offline levels of performance in the \textit{in situ} regime. In particular, sketching with the FJLT can yield results that approximately match those obtained offline across all datasets. While subsample sketching performs equally well on the Channel and almost as well on the Ignition datasets, its performance on the Neuron dataset is striking. Although it is not obvious from the presented statistics, the subsample results on the Neuron dataset are partially skewed by a few catastrophic failures. However, even in successful runs, the results are not at FJLT levels. Overall, we believe this result is not too surprising, as the non-Cartesian geometry of the neuron tree makes subsampling a particularly poor choice for sketching.

            The results in \cref{tab:performance} also show that our \textit{in situ} methods, which combine training of the full snapshot with regularization from previous sketched snapshots, outperform the Offline-FJLT strategy (which just uses the sketched snapshots). The Offline-FJLT is conceptually simple to implement, since it does the sketching \textit{in situ} and then trains offline. However, since it cannot train on the unsketched data, it incurs the \(\frac{1+\epsilon}{1-\epsilon}\) error. \reva{Our \textit{in situ} methods use sketched data but do not seem to incur this extra error, since the sketching is only intended to prevent forgetting, but the network can still train on unsketched data.} This is also unsurprising, since for human memory, the task of learning something new is usually more difficult than the task of not forgetting something already learned.

            \reva{Reported separately in \cref{tab:zfp}, we have also compared to another \textit{in situ} baseline in the form of ZFP \cite{lindstrom2014zfp}. Here, each snapshot and channel is compressed separately through the \texttt{zfpy} software \cite{zfp-software}. Comparable compression rates to what we obtain with INC are not achievable, but since ZFP is an error-bounded scheme, we can ask for similar RFE errors. At this similar quantitative performance, in terms of our RFE and PSNR metrics, ZFP yields two orders of magnitude lower compression rates.}

            \begin{table}[htbp]
                \centering
                \begin{tabular}{llcS[table-format=2.2]S[table-format=2.2]}
                    \toprule
                    \textbf{Dataset} & \textbf{Method} & \textbf{Compression Rate} & \textbf{RFE (\%)} $\downarrow$ & \textbf{PSNR (dB)} $\uparrow$ \\
                    \midrule
                    \multirow{2}{*}{Ignition}
                    & ZFP (Baseline) & 9.03$\times$ & 3.04 & 40.4 \\
                    & InSitu-FJLT (Proposed) & 142$\times$ & 2.64 & 41.7 \\
                    \addlinespace
                    \multirow{2}{*}{Neuron}
                    & ZFP (Baseline) & 5.42$\times$ & 0.64 & 58.5 \\
                    & InSitu-FJLT (Proposed) & 1882$\times$ & 0.75 & 58.2 \\
                    \addlinespace
                    \multirow{2}{*}{Channel}
                    & ZFP (Baseline) & 5.28$\times$ & 7.13 & 37.7 \\
                    & InSitu-FJLT (Proposed) & 862$\times$ & 5.22 & 37.4 \\
                    \bottomrule
                \end{tabular}
                \caption{\reva{Baseline \textit{in situ} performance of ZFP on test datasets. InSitu-FJLT results have been reproduced from \cref{tab:performance} for comparison.}}
                \label{tab:zfp}
            \end{table}

            In \Cref{tab:manifold-dimension-estimates}, we explore how well the theory from \cref{sec:theory} aligns with our empirical results. Although unreported in \Cref{tab:performance}, we use Offline-FJLT and Offline-Baseline results for \cref{eq:loss} to compute the \(\frac{1+\epsilon}{1-\epsilon}\) ratio from \cref{thm:jlt-surrogate}, which we subsequently use to solve for \(\epsilon\). We also compute principal component analysis (lPCA \cite{fukunaga1971algorithm}) based estimates of the local manifold dimension for the INC networks used with each of the target datasets. This local dimensionality estimate yields a rough idea of the overall manifold dimension. We perform this analysis using the \texttt{scikit-dimension} package \cite{scikit-dimension} by taking perturbations of size \(10^{-5}\) around a nominal point of network parameters. This nominal state is obtained within the \textit{in situ} setting after training on the first snapshot, i.e., the first time sketching is used.

            \begin{table}[ht]
                \centering
                \begin{tabular}{lS[table-format=1.3,round-mode=places,round-precision=3]S[table-format=1.3,round-mode=places,round-precision=3]S[table-format=2.0,round-mode=places,round-precision=0]S[table-format=1.2,round-mode=places,round-precision=2]}
                    \toprule            
                    \textbf{Datasets} & \textbf{Full Loss} & \textbf{Sketch Loss} & \textbf{Est. Manifold Dim. \(M\)} & \textbf{Est. Sample Factor (\%)} \\
                    \midrule
                    Ignition & 0.0262 & 0.0485 & 11 & 1.46 \\ 
                    Neuron & 0.0076 & 0.0326 & 29.454 & 0.03 \\
                    Channel & 0.0519 & 0.0717 & 50.008 & 0.19 \\
                    \bottomrule
                \end{tabular}
                \caption{Estimated sample factors using \cref{thm:manifold-jlt} and \cref{thm:jlt-surrogate}, with lPCA estimates of the local manifold dimension, for INC networks on the target datasets.}
                \label{tab:manifold-dimension-estimates}
            \end{table}

            Combining the observed \(\epsilon\) with the manifold dimension estimates, and ignoring log factors in \cref{thm:manifold-jlt} (i.e., setting $k=M/\epsilon^2$), we compute the estimated sample factors. This is intended to be a rough estimate, but even with these approximations, it suggests that the Ignition dataset needs the greatest sketching dimension, followed by the Channel dataset, and finally the Neuron dataset. Our actual experiments used a $5\%$, $1\%$, and $2\%$ sample factor for the Ignition, Neuron, and Channel datasets, respectively, which matches the ordering suggested by the rough theoretical estimate. This suggests that the notion of manifold dimension of a dataset may indeed be a useful concept.
            
            We can see visualizations of the results in \cref{tab:performance} comparing the original data to the offline baseline and \textit{in situ} FJLT reconstructions for the Ignition and Channel datasets in \cref{fig:ignition-comparison,fig:channel-flow-comparison}, respectively. For the Ignition dataset, \cref{fig:ignition-comparison}, we choose a snapshot from early in the simulation where the jet has not yet reached steady state. Note that the Ignition wavefront is accurately reconstructed by all methods, though the Offline Subsample method does have some small artifacts near the center of the wavefront, and the Offline FJLT has artifacts on the sides of the main jet. As observed in \cref{tab:performance}, the \textit{in situ} methods perform excellently, on par with the Offline-Baseline.
            
            \begin{figure}[ht]
                \centering
                \includegraphics[width=\linewidth]{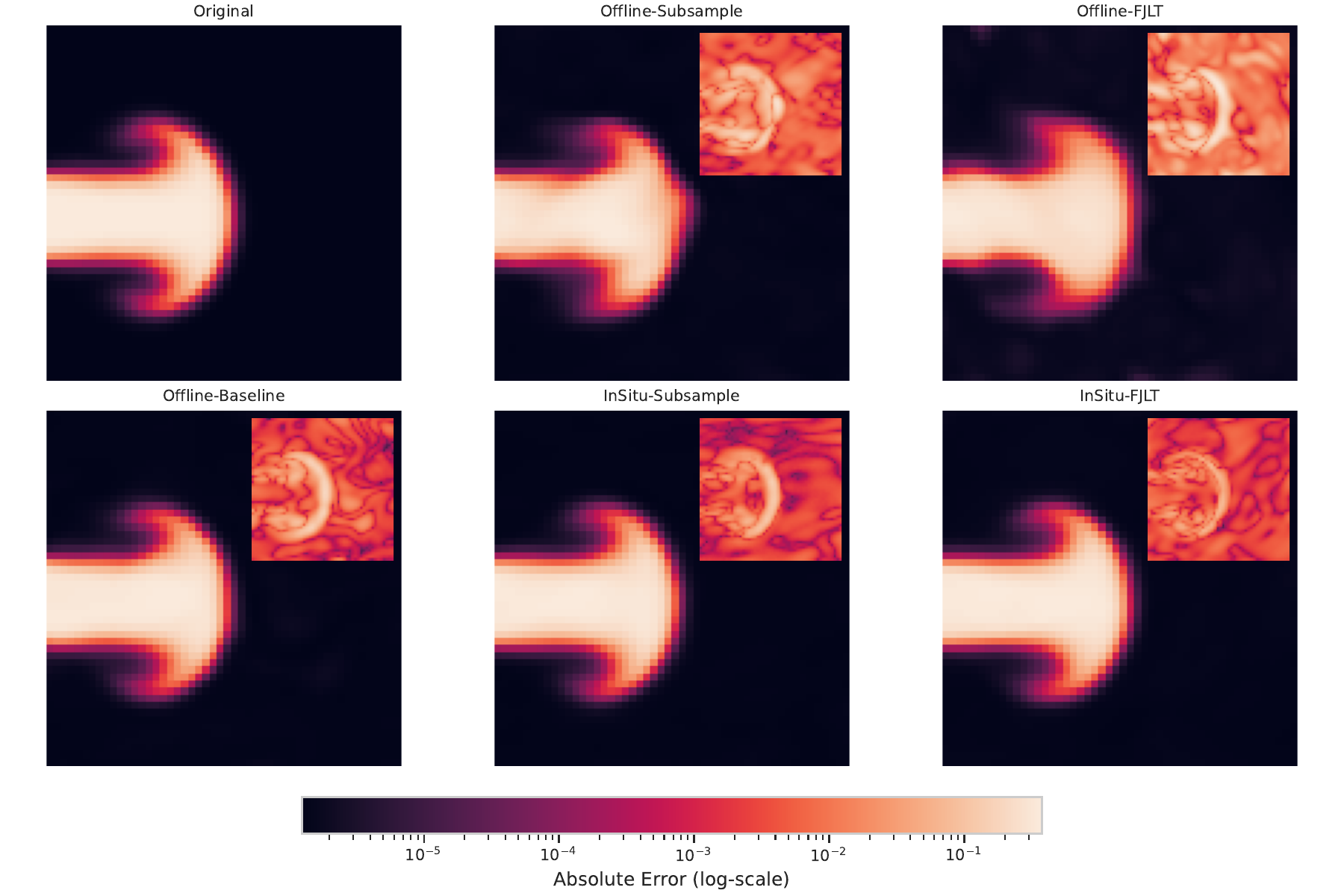}
                \caption{Reconstruction comparison of the Ignition data (channel \(c=2\)) at snapshot \(t=13\). The absolute error in the cutout is computed with respect to the original image, which is presented for reference. The colorbar only references the error cutouts.}
                \label{fig:ignition-comparison}
            \end{figure}
            
            For the Channel dataset, \cref{fig:channel-flow-comparison}, we select an arbitrary snapshot and \(z\)-coordinate for visualization, because the results do not differ across these dimensions due to the nature of the simulation. Across the three feature channels, the \(x\)-velocity (streamwise velocity) appears visually the worst. From a relative error perspective (RFE, \cref{eq:relative-error}), however, this channel performs the best, at about \(1\)\%. The error of the other velocity channels, at around \(6-8\)\%, is what drags down the overall performance. We can attribute this paradox to the difference in scales between the velocity components and artifacts of the relative error metric. The streamwise velocity has a significantly larger scale than the other two components, so despite it appearing worse, it performs better relative to its scale.
            
            \begin{figure}[ht]
                \centering
                \includegraphics[width=\linewidth]{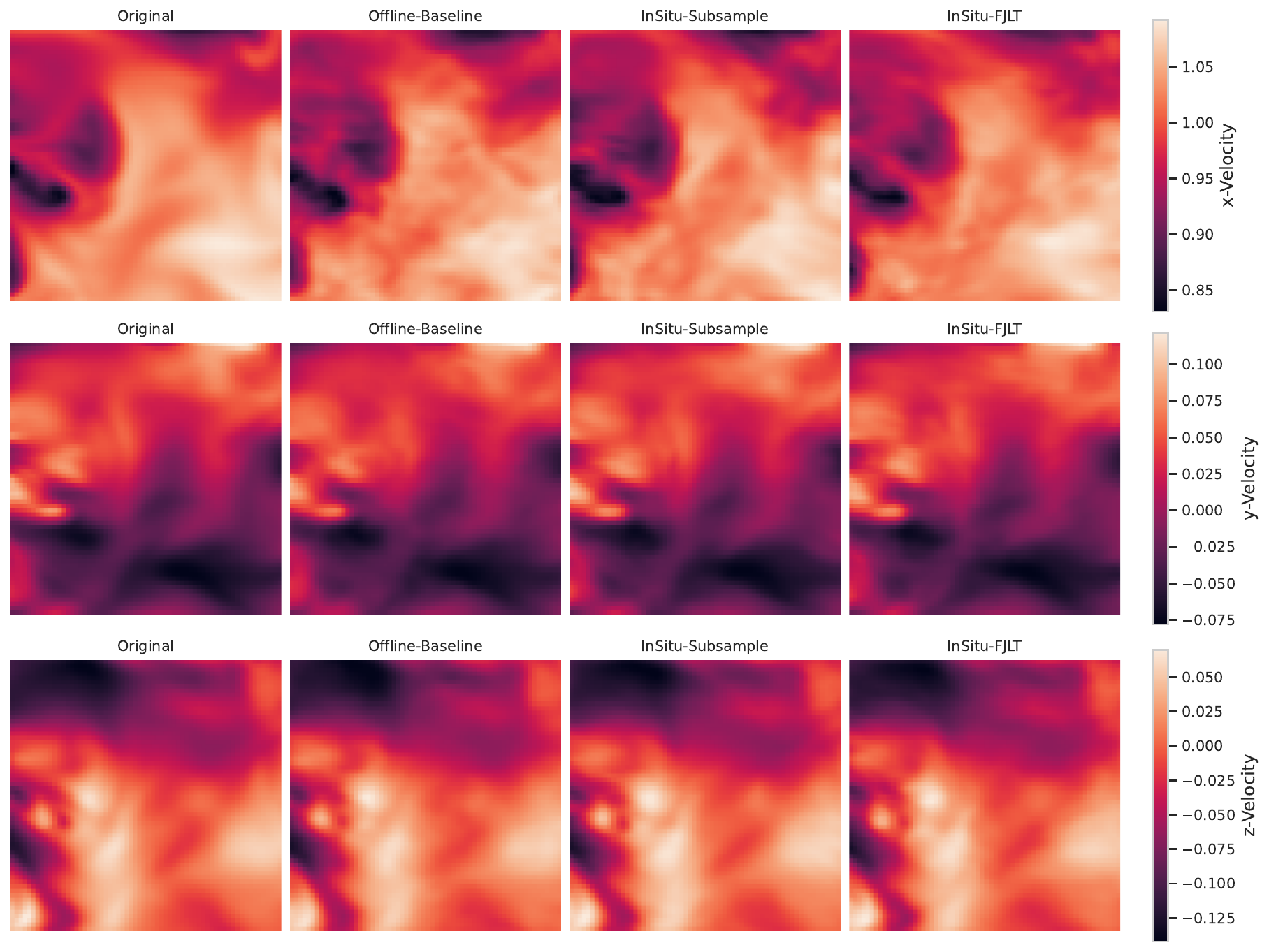}
                \caption{Reconstruction comparison of the Channel dataset on all three channel flow velocities \((x,y,z)\) components in top to bottom order) at snapshot \(t=250\) for a constant \(z\) coordinate in the center of the volume.}
                \label{fig:channel-flow-comparison}
            \end{figure}

            Per-snapshot relative error (\cref{eq:loss}) curves for training and testing on the Ignition dataset are given in \cref{fig:ignition-r3error}. \revb{The training error is split according to \cref{eq:insitu-loss} into the two components \(\mathcal{L}_\text{full}\) and \(\mathcal{L}_\text{sketch}\) corresponding to the two buffers. Thus, the ``Train Error (Full)'' can be viewed as the instantaneous error at the current snapshot, while the ``Train Error (Sketch)'' is the averaged error over previous sketched snapshots. The ``Test Error (Full)'' is simply the instantaneous error for a fixed snapshot after the network has been trained.} For early snapshots, before approximately \(t=50\), one can observe that the testing error is significantly better than the training loss, indicating the sketched snapshots have also actively been used for learning instead of only preventing forgetting. \reva{While this is not necessarily problematic, it may suggest we could alter our training strategy, e.g., by using smaller sketches, to yield a more optimal process.}
            
            \begin{figure}[ht]
                \centering
                \includegraphics[width=\linewidth]{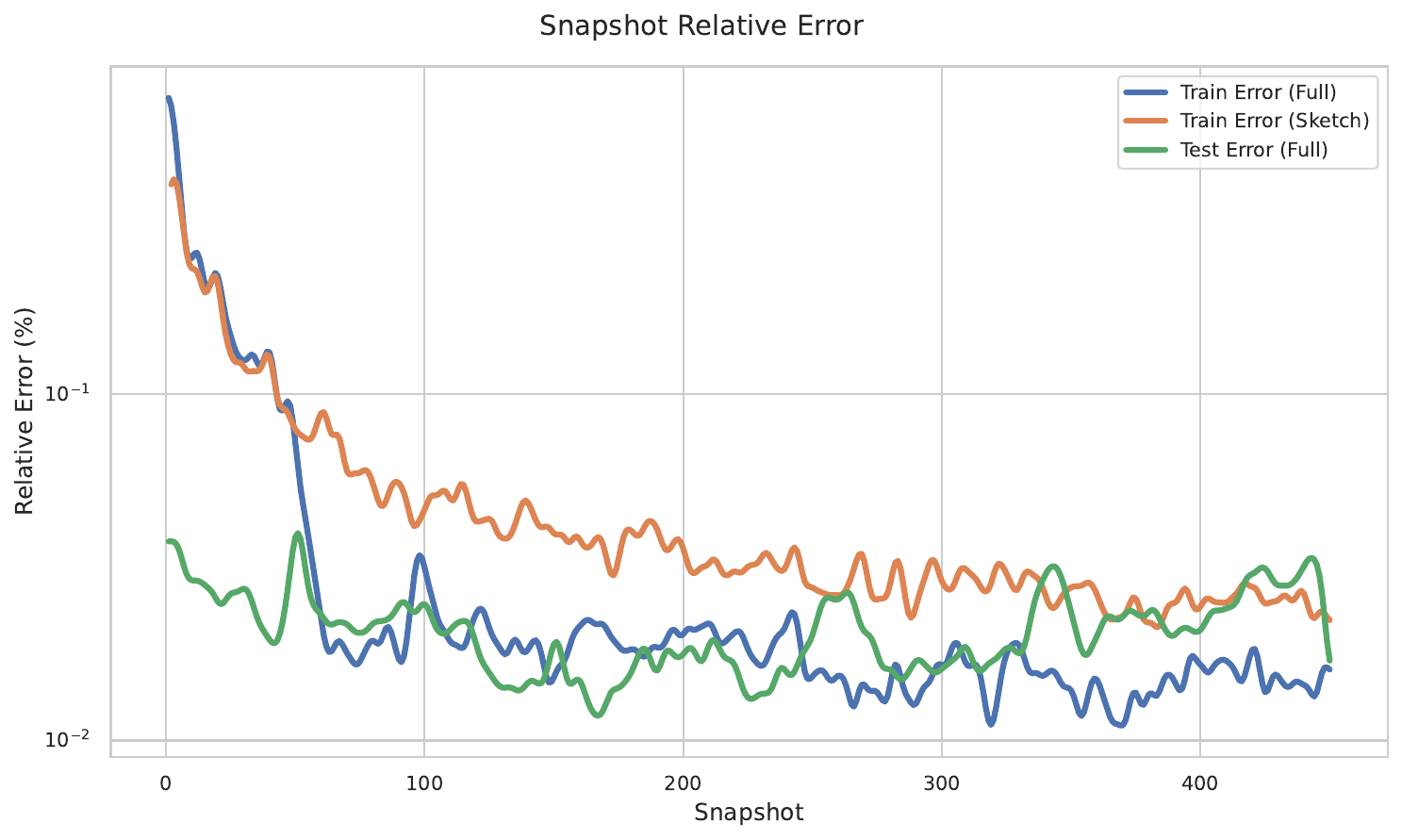}
                \caption{\revb{Training and testing error \cref{eq:loss} on the Ignition data for the ``InSitu-FJLT'' case. Training error is taken at the end of the optimizer steps for a given snapshot and split according to \cref{eq:insitu-loss}. Data has been smoothed via convolution with a Gaussian.}}
                \label{fig:ignition-r3error}
             \end{figure}
            
            \Cref{fig:perf-v-sketch} explores the impact of sketch size on reconstruction performance via five linearly spaced sample factors in \([0.001,0.01]\) and \([0.0055,0145]\) for FJLT and subsample sketching, respectively, on the Neuron dataset. In both RFE and PSNR metrics, FJLT performance improves consistently alongside the sample factor until reaching approximate offline values. As for subsampling, performance also increases with the sample factor, but we observe significantly more instability. In our experience, for larger sample factors, FJLT and subsample sketching lead to similar performance, but one may be ultimately limited in their choice by the amount of offline storage.
            
            \begin{figure}[ht]
                \centering
                \includegraphics[width=\linewidth]{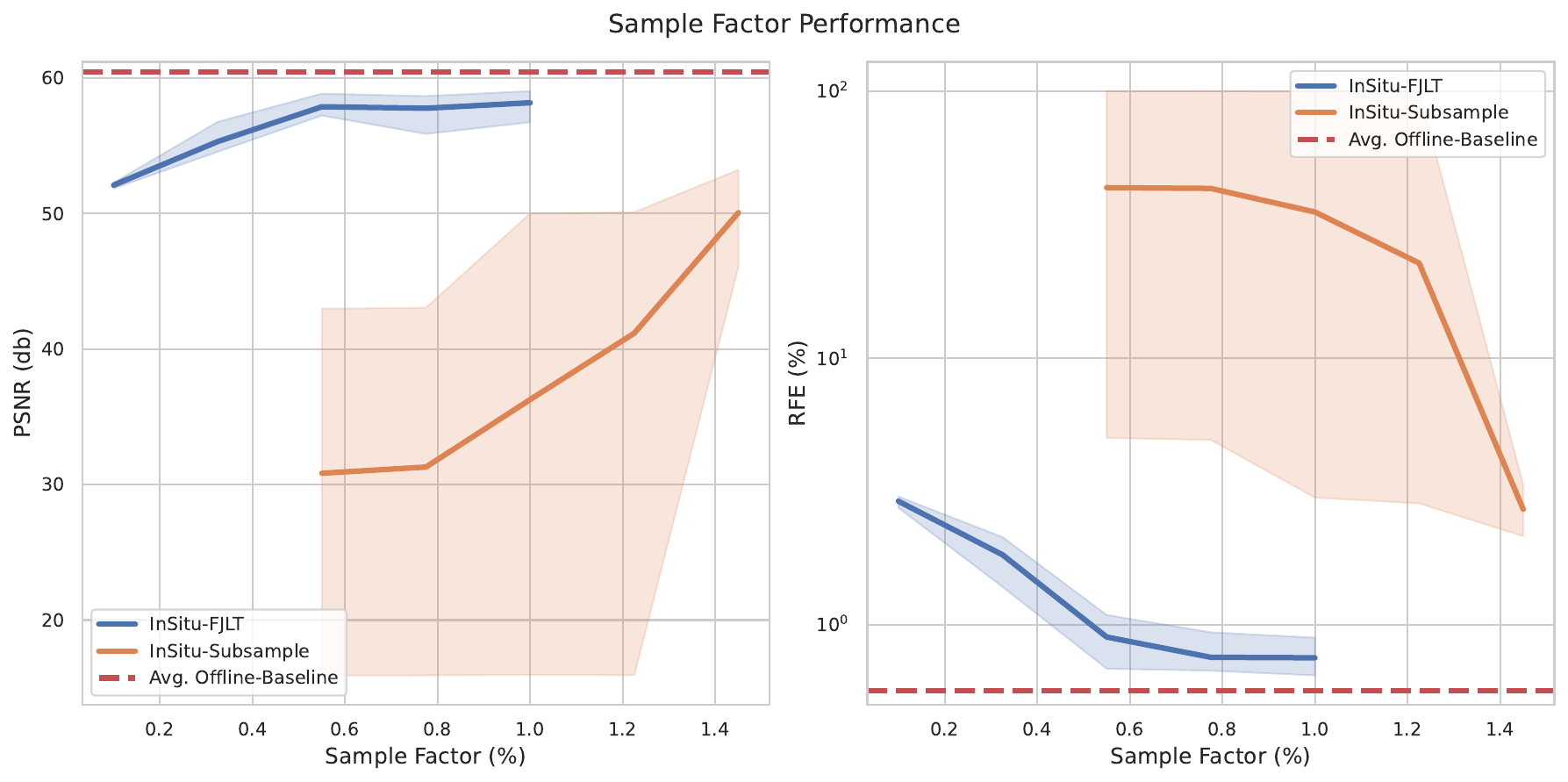}
                \caption{\textit{In situ} Sketch sample factor performance comparison at fixed compression rate on Neuron dataset.}
                \label{fig:perf-v-sketch}
            \end{figure}

            One may ask whether the inclusion of the hypernetwork in our overall INC architecture, shown in \cref{fig:network}, is necessary? Indeed, sketching as a regularization is just as applicable to a time-dependent INR, which we have found to be just as expressive in the offline setting (see \cref{fig:inr-offline}). To probe this question, \Cref{tab:inr-comparison} presents comparison results on the Ignition dataset in the \textit{in situ} regime with and without a hypernetwork. In the latter case, the hypernetwork branch seen in \cref{fig:full-network} is simply eliminated. \revb{Removing the hypernetwork results in fewer total parameters, so the ``INR'' models are appropriately upsized via an additional SIREN block and larger linear layer widths to ensure approximately the same compression rate.} We include results for both our subsample and FJLT sketching options. We observe that the addition of a hypernetwork significantly increases performance regardless of the sketch.
            
            \begin{table}[ht]
                \centering
                \begin{tabular}{lS[table-format=2.2(2.2), table-alignment-mode=none, table-number-alignment=left]S[table-format=2.2(2.2), table-alignment-mode=none, table-align-uncertainty]}
                    \toprule
                    \textbf{Method} & {\textbf{PSNR \(\uparrow\) (dB)}} & {\textbf{RFE \(\downarrow\) (\%)}} \\ 
                    \midrule
                    INR InSitu-Subsample & 21.4(1.4) & 49.6(6.7) \\
                    INR InSitu-FJLT & 27.1(1.4) & 22.6(8.1) \\ 
                    Hypernet + INR InSitu-Subsample & 28.2(9.3) & 32.0(27.5) \\
                    Hypernet + INR InSitu-FJLT & 37.2(0.7) & 4.7(0.5) \\
                    \bottomrule
                \end{tabular}
                \caption{Performance comparison with other \textit{in situ} INR-based methods on the Ignition dataset at approximately 140\(\times\) compression rate with a \(0.7\)\% sample factor. Metric statistics are reported as mean plus-or-minus standard deviation from five identical trials.}
                \label{tab:inr-comparison}
            \end{table}

    \section{Discussion and Future Work}\label{sec:conclusion}

        We have presented a new method for training neural compressors \textit{in situ} with a simulation. The method primarily uses a limited memory buffer of sketched data samples to regularize against catastrophic forgetting during the learning process. This approach is theoretically motivated via a Johnson-Lindenstrauss-type result and empirically validated by showing that offline neural compression results can be matched in the \textit{in situ} setting. In our particular case, we employ a hypernetwork paired with an implicit neural representation (INR), but the training approach can be extended to other network architectures. While our results are encouraging, a myriad of relevant questions remain unanswered that deserve further exploration. We break this into the two major components of our work: sketching for \textit{in situ} learning and neural compression.

        \paragraph{Sketching for \textit{In Situ} Learning} Our investigation herein has only scratched the surface of how this paradigm could be used. To begin, the sketching techniques we considered --- subsampling and the fast Johnson-Lindenstrauss transform (FJLT) --- hardly comprise a complete overview of available options. Research on the best sketching method is warranted, including deterministic sketches or sketches that are data-informed in some regard.
        
        Beyond the operator used to sketch, the protocol for when to sketch and how to incorporate them into the buffer has significant room for improvement. So far, we have considered identically sized sketches for each snapshot, but this is simply the most straightforward method to use the available memory. For example, it may be desirable to allocate more storage, allowing for larger sketches, to earlier snapshots. Additionally, an adaptive method for overwriting past sketch data would allow one to use all available memory at all times. As for the full snapshot buffer, we only considered the case where a single sample is available at a time. In practice, this number could be far larger, providing opportunities for sketching or encoding as a block.

        \revb{In a very similar vein, there is significant room for improvement in determining the sketch regularization weight. While it is motivating that our simply chosen fixed value performed well across a variety of experiments, a more informed approach is likely to yield superior results. Furthermore, this coefficient provides a means for weighting sketches differently depending on desirable characteristics. As a simple example, one may want to weight older sketches more heavily, to regularize more aggressively against snapshots the network has had more time to ``forget''.}

        On the theoretical side, much more can be done to explain the empirical success of small sketches in preventing catastrophic forgetting. The relevant question to ask is how large a sketch needs to be to serve as an effective regularizer? Rigorously answering this question would lead to a more precise setting of the sketch size. In a similar vein, our use of hypernetworks was empirically motivated, but deserves a more thorough examination as to why it works better with this regularization technique than a standard implicit neural representation (INR).

        \paragraph{Neural Compression} Further work remains to make neural compression into a practical tool. While some improvements are straightforward, they have been set aside here to keep the focus on our primary contribution, i.e., demonstrating \textit{in situ} training on an INR. For example, post-training (or other more complex approaches) network quantization can significantly increase the compression rate with little impact on the reconstruction performance. As well, we do not incorporate spatial gradients into our loss, but that is often an important aspect of sine-based INR (SIREN). On a similar note, we do not enforce any physics, despite having access to such knowledge from the simulation. Taking advantage of this extra information could improve the overall performance. In a more general sense, we also do not optimize the model architecture or its learning process.

        In practice, many, if not most, large-scale simulations are carried out in a distributed manner, where the mesh is split into several similarly sized partitions. It is unclear how this change in setting would impact the neural compression approach. For example, one could optimize a central network via updates from the distributed partitions, or one could optimize distributed networks and then combine them in a distillation phase.
    
    \section*{Acknowledgments}\label{sec:acknowledgments}

        This material is based upon work supported by the Department of Energy, National Nuclear Security Administration under Award Number DE-NA0003968, as well as Department of Energy Advanced Scientific Computing Research Awards DE-SC0022283 and DE-SC0023346. Alireza Doostan's work has also been partially supported by the AFOSR grant FA9550-20-1-0138. Cooper Simpson's work has also been supported by the U.S. Department of Energy, Office of Science, Office of Advanced Scientific Computing Research, Department of Energy Computational Science Graduate Fellowship under Award Number DE-SC0026073.
        
        We would also like to thank Kenneth Jansen, John Evans, Kevin Doherty, Angran Li, and Jeff Hadley from the University of Colorado Boulder for their helpful discussions surrounding this work. We also thank Youngkyu Kim for performing some insightful experiments regarding the performance of various sketches.
        
        Additionally, this work was facilitated by several high-performance computing resources:
        \begin{itemize}
            \item The Alpine high-performance computing resource at the University of Colorado Boulder. Alpine is jointly funded by the University of Colorado Boulder, the University of Colorado Anschutz, and Colorado State University, and with support from NSF grants OAC-2201538 and OAC-2322260.
            \item The Blanca condo computing resource at the University of Colorado Boulder. Blanca is jointly funded by computing users and the University of Colorado Boulder.
            \item Advanced computational, storage, and networking infrastructure provided by the Hyak supercomputer system and funded by the STF at the University of Washington.
        \end{itemize}

        This report was prepared as an account of work sponsored by an agency of the United States Government. Neither the United States Government nor any agency thereof, nor any of their employees, makes any warranty, express or implied, or assumes any legal liability or responsibility for the accuracy, completeness, or usefulness of any information, apparatus, product, or process disclosed, or represents that its use would not infringe privately owned rights. Reference herein to any specific commercial product, process, or service by trade name, trademark, manufacturer, or otherwise does not necessarily constitute or imply its endorsement, recommendation, or favoring by the United States Government or any agency thereof. The views and opinions of authors expressed herein do not necessarily state or reflect those of the United States Government or any agency thereof.
    
    \newpage
    \bibliographystyle{unsrtnat}
    \bibliography{ref}



\end{document}